\documentclass[twoside]{aiml16}
\usepackage{aiml16macro}

\usepackage[UKenglish]{babel}
\usepackage{latexsym}
\usepackage{amssymb}
\usepackage{amsmath}
\usepackage{etex}
\usepackage{color}
\usepackage[all]{xy}
\usepackage{url}
\usepackage[bitstream-charter]{mathdesign}
\usepackage[T1]{fontenc}

\newcommand{\red}[1]{\textcolor{red}{#1}}
\renewcommand{\@}{\red{@}}

\newcommand{\ELKv}{\textbf{ELKv}}
\newcommand{\ELKvr}{\ensuremath{\ELKv^r}}

\newcommand{\LKv}{\textbf{MLKv}}
\newcommand{\MLKv}{\textbf{MLKv}}
\newcommand{\LKvr}{\ensuremath{\LKv^r}}

\newcommand{\MLKvb}{\ensuremath{{\LKv^b}}}

\newcommand{\MLKvr}{\ensuremath{\textbf{MLKv}^r}}

\newcommand{\BP}{\textbf{P}}

\newcommand{\SMLKv}{\ensuremath{\mathbb{SMLKV}}}

\newcommand{\SELKvr}{\ensuremath{\mathbb{SELKV}^r}}
\newcommand{\SMLKvr}{\ensuremath{\mathbb{SMLKV}^r}}

\newcommand{\SMLKvb}{\ensuremath{{\mathbb{SMLKV}^b}}}

\newcommand{\lr}[1]{\langle #1 \rangle}
\newcommand{\rel}[1]{\stackrel{#1}{\rightarrow}}

\newcommand{\mc}[1]{\mathcal{#1}}
\newcommand{\M}{\mc{M}}

\newcommand{\tr}[1]{\text{#1}}
\newcommand{\natN}{\mathbb{N}}

\newcommand{\I}{\mathbf{I}}
\newcommand{\Dia}{\Diamond}

\newcommand{\DISTK}{{\texttt{DISTK}}}
\newcommand{\DISBK}{{\texttt{ATEUC}}}
\newcommand{\DISTKvb}{${\texttt{DISTKv}}^b$}

\newcommand{\GENK}{{\texttt{NECK}}}
\newcommand{\EXIST}{\texttt{INCL}}
\newcommand{\NECKvr}{${\texttt{NECKv}}^r$}
\newcommand{\NECKvb}{${\texttt{NECKv}}^b$}

\newcommand{\RE}{{\texttt{RE}}}

\newcommand{\SUB}{{\texttt{SUB}}}

\newcommand{\AxT}{{\texttt{T}}}
\newcommand{\AxTr}{{\texttt{4}}}
\newcommand{\AxEuc}{{\texttt{5}}}

\newcommand{\DISTKvr}{\ensuremath{\texttt{DISTKv}^r}}
\newcommand{\KvrTr}{\ensuremath{\texttt{Kv}^r4}}
\newcommand{\KvrDIS}{\ensuremath{\texttt{Kv}^r\lor}}
\newcommand{\KvrBot}{\ensuremath{\texttt{Kv}^r\bot}}

\newcommand{\KvDIS}{\ensuremath{\texttt{Kv}\lor}}

\newcommand{\EQUIV}{\ensuremath\texttt{SYM}}

\newcommand{\N}{\mathcal{N}}

\newcommand{\Kv}{\ensuremath{\textit{Kv}}}
\newcommand{\Kw}{\ensuremath{\textit{Kw}}}
\newcommand{\Kh}{\ensuremath{\textit{Kh}}}

\newcommand{\K}{\ensuremath{\textit{K}}}
\newcommand{\hK}{\ensuremath{\hat{\textit{K}}}}
\newcommand{\C}{\mathbf{C}}
\newcommand{\lra}{\ensuremath{\leftrightarrow}}
\newcommand{\Lra}{\ensuremath{\Leftrightarrow}}

\newcommand{\Z}{\texttt{Z}}

\newcommand{\bis}{\mathrel{\mathchoice%
{\raisebox{.3ex}{$\,
  \underline{\makebox[.7em]{$\leftrightarrow$}}\,$}}%
{\raisebox{.3ex}{$\,
  \underline{\makebox[.7em]{$\leftrightarrow$}}\,$}}%
{\raisebox{.2ex}{$\,
  \underline{\makebox[.5em]{\scriptsize$\leftrightarrow$}}\,$}}%
{\raisebox{.2ex}{$\,
  \underline{\makebox[.5em]{\scriptsize$\leftrightarrow$}}\,$}}}}
  
\renewenvironment{definition}{\vspace{-\lastskip}\par \addvspace{\thmvspace}\begin{defn}}{\end{defn}\par\addvspace{\thmvspace}}

\newcommand{\cbis}{\bis_\textbf{C}}
\renewcommand{\lra}{\leftrightarrow}
\newcommand{\bBox}[2]{{\Box_{#1}^#2}}
\newcommand{\bDia}[2]{{{\Diamond}_{#1}^#2}}
\newcommand{\bbBox}[2]{{\Box_{#1}^#2}}
\newcommand{\bbDia}[2]{{\Diamond_{#1}^#2}}

\newcommand{\To}{\hookrightarrow}

\newcommand{\VV}{\overline{V}}
\newcommand{\UU}{\overline{U}}



\begin{document}
\begin{frontmatter}
 \title{``Knowing value'' logic as a normal modal logic }
 \thanks{Published in the proceedings of AiML 2016 pp. 362-381 by College Publications. This is a draft with a more detailed proof of Proposition \ref{prop.consis}.}
\thanks{Yanjing Wang acknowledges the support from the National Program for Special Support of Eminent Professionals and NSSF key projects 12\&ZD119. The authors thank the anonymous reviewers for their helpful comments on an earlier version of this paper.}
\author{Tao Gu \and Yanjing Wang}
 \address{Department of Philosophy, Peking University}
\maketitle

\begin{abstract}
Recent years witness a growing interest in nonstandard epistemic logics of ``knowing whether'', ``knowing what'', ``knowing how'', and so on. These logics are usually not normal, i.e., the standard axioms and reasoning rules for modal logic may be invalid. In this paper, we show that the conditional ``knowing value'' logic proposed by Wang and Fan \cite{WF13} can be viewed as a disguised normal modal logic by treating the negation of the Kv operator as a special diamond. Under this perspective, it turns out that the original first-order Kripke semantics can be greatly simplified by introducing a ternary relation $R_i^c$ in standard Kripke models, which associates one world with two $i$-accessible worlds that do not agree on the value of constant $c$. Under intuitive constraints, the modal logic based on such Kripke models is exactly the one studied by Wang and Fan \cite{WF13,WangF14}. Moreover, there is a very natural binary generalization of the ``knowing value'' diamond, which, surprisingly, does not increase the expressive power of the logic. The resulting logic with the binary diamond has a transparent normal modal system, which sharpens our understanding of the ``knowing value'' logic and simplifies some previously hard problems.   
\end{abstract}

\begin{keyword}
knowing value, normal modal logic, ternary relation, binary modality, first-order modal logic
\end{keyword}
\end{frontmatter}


\section{Introduction}

Classic epistemic logic \`{a} la von Wright and Hintikka mainly studies the inference patterns about propositional knowledge by using a modal operator $K_i$ to express that agent $i$ \textit{knows that} a proposition is true. Epistemic logic has been successfully applied to various fields to capture knowledge and its change in multi-agent settings, such as distributed systems and imperfect information games (cf. e.g. \cite{RAK,ELbook}). However, in everyday life, knowledge is often expressed in terms of knowing the answer to an embedding question, such as ``I know \textit{whether} the claim is true'', ``I know \textit{what} your password is'', ``I know \textit{how} to prove the theorem'' and so on. Recent years witness a growing interest in the logics of such knowledge expressions \cite{Plaza89:lopc,hoeketal:2004,WF13,WangF14,FanWD14,FWvD15,Wang15:lori}. The fundamental idea is to simply treat ``knowing whether'', ``knowing what'', ``knowing how'' as new modalities, just as ``knowing that'' in standard epistemic logic (cf. the survey \cite{BKT}).

The resulting logics are usually not \textit{normal} in the technical sense that usual modal axioms and rules may be invalid. For example, the \texttt{K} axiom for normal modal logic is not valid for the knowing whether operator, i.e., $\Kw(p\to q)\land \Kw p\to \Kw q$ does not hold, e.g., knowing that $p$ is false makes sure that you know whether $p$ and also whether $p\to q$, but it does not tell you anything about the truth value of $q$. Similarly, knowing how to swim and knowing how to cook does not mean knowing how to swim and cook at the same time, thus invalidating $\Kh p\land \Kh q\to \Kh(p\land q)$, a theorem in normal modal logic when taking $\Kh$ as a box modality. 

On the other hand, the non-normality does not necessarily mean that we have to abandon Kripke models for more general models. As demonstrated in \cite{FWvD15}, we can still use Kripke models to accommodate those non-normal modal logics by using nonstandard yet intuitive truth conditions for the new modalities. However, there is usually a clear asymmetry between the relatively simple modal language and the ``rich'' model which may cause troubles in axiomatizing the logic. For example, the conditional knowing value logic proposed in  \cite{WF13} has the following language \ELKvr (where $i\in \I, p\in \BP, c\in \C$ and $\I, \BP,\C$ are countably infinite):
$$\phi::=\top\mid p\mid \neg\phi\mid (\phi\wedge\phi)\mid \K_i\phi\mid \Kv_i(\phi,c)$$
\noindent where $\Kv_i(\phi, c)$ says that $i$ knows [what] the value of $c$ [is], given $\phi$, e.g., I know the password of this website given it is 4-digit, since I may have only one 4-digit password ever, although I am not sure which password I used for this website without the information on the digits. The language is interpreted on first-order Kripke models $\M=\langle S,D,\{\to_i:i\in \I\}, V, V_\C\rangle$ where $\langle S, \{\to_i:i\in \I\}, V\rangle$ is a standard Kripke model, and $D$ is a \textit{constant} domain, and $V_\C$ assigns to each (non-rigid) $c\in \C$ an element in $D$ on each $s\in S$. The semantics for the new $\Kv_i$ operator is as follows: 
$$
\begin{array}{|lll|}
\hline
\M,s\vDash \Kv_i (\phi, c) &\Longleftrightarrow& \text{for any}~t_1,t_2: ~\text{if}~s\to_it_1,s\to_it_2, \M, t_1\vDash \phi \text{ and } \M ,t_2\vDash \phi,\\ &&\text{then}~V_\C(c,t_1)=V_\C(c,t_2).\\
\hline
\end{array}
$$
According to this semantics, the formula $\Kv_i(\phi,c)$ can also be understood as a first-order modal formula: $\exists x K_i(\phi\to c=x)$.\footnote{Note that there is a constant domain $D$ and each $c$ has a unique value on each state.} Thus $\ELKvr$ can be viewed as a (small) fragment of first-order modal logic where a quantifier is packed with a modality. It is shown in \cite{WF13} that \ELKvr\ is equally expressive as public announcement logic extended with unconditional $\Kv_i$ operators proposed in \cite{Plaza89:lopc} (i.e., only $\Kv_i(\top, c)$ are allowed). Satisfiability of $\ELKvr$ over arbitrary models is  \textsc{Pspace}-complete, as proved in \cite{Ding15}. Note that although values are assigned to the constants in the model, we cannot talk about them explicitly in the language. In fact, we only care about whether on some worlds a given constant has exactly the \textit{same} value. The contrast between the rich model and the simple language made the completeness proof of the following axiomatization $\SELKvr\mathbb{S}5$ quite involved over multi-agent S5 models (cf. \cite{WangF14}).\footnote{$\phi[\psi\slash\chi]$ in the rule $\RE$ (replacement of equivalents) denotes any formula obtained by replacing \textit{some} occurrences of $\psi$ by $\chi$.} 
\begin{center}
\begin{minipage}{0.55\textwidth}
\begin{tabular}{lc}
\multicolumn{2}{c}{System $\SELKvr\mathbb{S}5$}\\
\multicolumn{2}{l}{Axiom Schemas}\\
\texttt{TAUT} & \tr{ all the instances of tautologies}\\
\DISTK &$ \K_i (p\to q)\to (\K_i p\to \K_i q$)\\
\AxT&$\K_ip\to p$ \\
\AxTr&$\K_ip\to \K_i\K_ip$ \\
\AxEuc&$\neg \K_ip\to \K_i\neg \K_ip$ \\
\DISTKvr& $\K_i (p\to q)\to (\Kv_i(q,c)\to \Kv_i(p,c))$\\
\KvrTr& $\Kv_i(p,c)\to \K_i \Kv_i(p,c)$\\
\KvrBot& $\Kv_i(\bot,c)$\\
\KvrDIS& $\hK_i(p\wedge q)\wedge \Kv_i(p,c)\wedge \Kv_i(q,c)\to \Kv_i(p\vee q,c)$\\
\end{tabular}
\end{minipage}
\hfill
\begin{minipage}{0.3\textwidth}
\begin{tabular}{lc}
\multicolumn{2}{l}{Rules}\\
\texttt{MP}& $\dfrac{p,p\to q}{q}$\\
\GENK &$\dfrac{\phi}{\K_i \phi}$\\
\SUB &$\dfrac{\phi}{\phi [p\slash\psi]}$\\
\RE &$\dfrac{\psi\lra\chi}{\phi\lra \phi[\psi\slash\chi]}$
\end{tabular}
\end{minipage}
\end{center}
Since the $\Kv_i$ operator is not a modality taking only propositions as  arguments, it is hard to say whether the above logic is normal or not. $\DISTKvr$ looks a little bit like the \texttt{K} axiom but it is in fact about the interaction between $\K_i$ and $\Kv_i$. $\KvrTr$ is a variation of the positive introspection axiom, and the corresponding negative introspection is derivable. \KvrBot\ says that the $\Kv_i$ operator is essentially a  conditional. Axiom $\KvrDIS$ handles the composition of the conditions, where $\hK_i$ is the dual of $\K_i$. 

\medskip

In this paper, we look at $\ELKvr$ from a new yet ``normal modal logic'' perspective in order to answer the following questions: 
\begin{enumerate}
\item Since we do not talk about values in the language, is there a simpler value-free Kripke-model based semantics for $\ELKvr$ that can keep the logic (valid formulas) the same? If so, we can restore the symmetry between the language and the model and understand the essence of our logic.
\item Can $\ELKvr$ be linked to a normal modal logic (modulo some syntactic transformation)? If so, we can apply many standard modal logic techniques to simplify previously complicated discussions. 
\end{enumerate}

We give positive answers to both questions, inspired by a crucial observation: 
\paragraph{Observation} \textit{$\neg \Kv_i(\phi, c)$ can be viewed as a diamond operator $\bDia{i}{c}\phi$ which says that there are two $i$-accessible $\phi$-worlds, which do not agree on the value of $c$.}\\

Note that to simplify the technical discussion in order to reveal the crucial points, in this paper we focus on the logic over arbitrary models. Our techniques can be applied to the S5 setting. 

\medskip

The contributions of this paper are summarized as below: 
\begin{itemize}
\item We give a simple alternative Kripke semantics to $\ELKvr$ without value assignments, which does not change the set of valid formulas. The  completeness proof is much simpler compared to the one in \cite{WangF14}.    
\item We generalize $\bDia{i}{c}\phi$ in a natural way to a binary diamond operator $\bDia{i}{c}(\phi,\psi)$. It turns out the generalization does not increase the expressive power of the logic but it can give us a transparent normal modal logic proof system.
\item The normal modal logic perspective helps us to discover a bisimulation notion for $\ELKvr$ and obtain a proof system for a weaker language proposed by \cite{Plaza89:lopc}. 
\end{itemize}
Our findings show that $\ELKvr$ is essentially a ``disguised'' normal logic, and this may help us to understand such nonstandard epistemic operators better.

\medskip

The rest of the paper is organized as follows: we first introduce in Section~\ref{Sec.intro} the language with the unary diamond $\bDia{i}{c}$ and a semantics based on Kripke model with both binary and ternary relations under three intuitive constraints. We show that this semantics is  equivalent to the original FO Kripke semantics of \ELKvr\ modulo validity (under a straightforward syntactic translation). In Section~\ref{Sec.comp} we prove the completeness of the translated $\SELKvr$ system w.r.t.\ the new semantics directly. This demonstrates the advantages of using this simplified semantics. In Section~\ref{Sec.gen} we generalize the $\bDia{i}{c}$ naturally to a binary one and show that the extended language is in fact equally expressive as $\ELKvr$. On the other hand,  the extended language facilitates a transparent normal logic proof system. It then helps us in Section 5 to come up with a notion of bisimulation for $\ELKvr$ and obtain a proof system for a weaker language proposed earlier. We conclude  in the end with future directions. 

\section{Negation of $\Kv_i$ as a diamond}
\label{Sec.intro}
As we mentioned in the introduction, $\neg\Kv_i(\phi, c)$ can be viewed as a diamond formula $\bDia{i}{c}\phi$: there are two $i$-accessible $\phi$-worlds which do not agree on the value of $c$. Then $\bBox{i}{c}\phi:=\neg \bDia{i}{c}\neg \phi$ means that all the $i$-accessible $\neg \phi$-worlds agree on the value of $c$ (and it is $\Kv_i(\neg \phi,c)$ essentially.). For uniformity of the language, we take $\bBox{i}{c}$ as the primitive symbol and introduce the following language ($\LKvr$): 

$$\phi::=\top\mid p\mid \neg\phi\mid (\phi\wedge\phi)\mid \Box_i\phi\mid \bBox{i}{c}\phi$$
\noindent where $p\in\BP, c\in \C, i\in \I.$ We can, without difficulty, inductively define a translation function $T$ from the original $\ELKvr$ to this modal language (the other way is also straightforward):

\begin{definition}
A translation function $T$ from $\ELKvr$  to \LKvr formulas is defined as follows:
$$\begin{array}{rcl}
T(p)&=&p \\
T(\neg\phi)&=&\neg T(\phi)\\
T(\phi\land\psi)&=&T(\phi)\land T(\psi)\\
T(\K_i\phi)&=&\Box_i T(\phi)\\
T(\Kv_i(\phi,c))&=&\bBox{i}{c}\neg T(\phi)
\end{array}
$$ 
\end{definition}
Now we have the following translated axioms (the names are kept): 
$$\begin{array}{rcl}
T(\DISTKvr)&=& \Box_i(p\to q)\to (\bBox{i}{c}\neg q\to \bBox{i}{c}\neg p) \\
T(\KvrDIS)&=&\Diamond_i(p\land q)\land \bBox{i}{c}\neg p\land \bBox{i}{c}\neg q\to \bBox{i}{c}\neg(p\lor q)\\
T(\KvrBot)&=&\bBox{i}{c}(\neg \bot)
\end{array}
$$ 
We can massage the axioms,\footnote{For example, T(\DISTKvr) is equivalent to $\Box_i(\neg q\to \neg p)\to (\bBox{i}{c}\neg q\to \bBox{i}{c}\neg p)$ under \RE, which is equivalent to $\Box_i (p\to q)\to (\Box_i p\to \Box_i q)$ under \SUB.}and obtain the following equivalent system (modulo translation T) from \SELKvr\ (i.e., \SELKvr-$\mathbb{S}5$ without the S5 related axioms $\AxT, \AxTr, \AxEuc, \KvrTr $):\\
\begin{minipage}{0.64\textwidth}
\begin{center}
\begin{tabular}{lc}
\multicolumn{2}{c}{System \SMLKvr}\\
\multicolumn{2}{l}{Axiom Schemas}\\
\texttt{TAUT} & \tr{ all the instances of tautologies}\\
\DISTK &$ \Box_i (p\to q)\to (\Box_i p\to \Box_i q)$\\
$\DISTKvr$ &$ \Box_i(p\to q)\to (\bBox{i}{c} p\to \bBox{i}{c}q$)\\
$\KvrDIS$ &$\Dia_i (p\land q)\land \bDia{i}{c}(p\lor q)\to  (\bDia{i}{c}p\lor \bDia{i}{c}q)$\\
\end{tabular}
\end{center}
\end{minipage}
\hfill
\begin{minipage}{0.30\textwidth}
\begin{center}
\begin{tabular}{lc}
\multicolumn{2}{l}{Rules}\\
\texttt{MP}& $\dfrac{\phi,\phi\to\psi}{\psi}$\\
\GENK &$\dfrac{\phi}{\Box_i \phi}$\\
\NECKvr &$\dfrac{\phi}{\bBox{i}{c}\phi}$\\
\SUB &$\dfrac{\phi}{\phi [p\slash\psi]}$\\
\RE &$\dfrac{\psi\lra\chi}{\phi\lra \phi[\psi\slash\chi]}$\\
\end{tabular}
\end{center}
\end{minipage}
\medskip

Note that instead of $\KvrBot$, we have a more classic-looking rule \NECKvr. In fact, it is equivalent to have either $\KvrBot$ or \NECKvr\ in the system: from \NECKvr, it is trivial to derive $\KvrBot$, and from $\KvrBot$, $\DISTKvr$ and \GENK\ it is also straightforward to derive each instance of \NECKvr\ by taking $p$ in $\DISTKvr$ as $\top.$ 

To a modal logician, $\SMLKvr$ may look much more friendly compared to $\SELKvr$. In particular, $\KvrDIS$ is simply a conditional distribution axiom for $\bDia{i}{c}$ over disjunction. Note that $\Diamond_i(p\lor q)\to (\Diamond_i p\lor \Diamond_i q)$ is valid but $\bDia{i}{c}(p\lor q)\to  (\bDia{i}{c}p\lor \bDia{i}{c}q)$ is not, e.g., all the $p$ worlds agree on the value of $c$ and all the $q$ worlds agree on the value of $c$ but they just cannot agree with each other. This demonstrates that $\bDia{i}{c}$ is apparently not a normal modality. However, as we will discover later that this apparent non-normality is a bit misleading and we will restore the normality in the next section by considering a natural binary generalization of the $\bDia{i}{c}$ operator.    

\medskip 

Now we are going to give a simplified but equivalent semantics to \MLKvr\ such that the system $\SMLKvr$ is  sound and complete. The idea is to abandon the first-order Kripke model and use a rather standard Kripke model for propositional modal logics since much of the information in the FO Kripke model is not relevant for the language \MLKvr. 

\begin{definition}
A model for $\LKvr$ is a tuple $\langle S,\{\to_i:i\in \I\},\{R_{i}^{c}:i\in \I,c\in\C\},V\rangle$, where
\begin{itemize}
\item $\lr{S,\{\to_i:i\in \I\},V}$ is a standard Kripke model with binary relations.
\item For each $c\in\C$, $R_{i}^{c}$ is a triple relation over $S$ satisfying for any $s,t,u,v\in S$: 
\begin{enumerate}
\item \textit{SYM}: $sR_i^c tu \iff sR_i^c ut$
\item \textit{INCL}:
 $sR_{i}^{c}tu$ only if $s\to_i t$ and $s\to_i u$
\item \textit{ATEUC:} $sR_{i}^{c}tu$ and $s\to_i v$ imply that at least one of $sR_{i}^{c}tv$ and  $sR_{i} ^{c}uv$ holds
\end{enumerate}

\end{itemize}
\end{definition}


Intuitively, $sR_{i}^{c}tu$ roughly means that $s$ can see two $i$-accessible worlds $t,u$ which do not agree on the value of $c$, although we do not have value assignments for $c$ in the model. Further conditions are imposed to let the ternary relation really capture what we want. (i) is a symmetry condition on the later two arguments of $R^c_i$.  Condition (ii) establishes the connection between the ternary and binary relations. The most crucial condition is (iii), an anti-euclidean property\footnote{Euclidean property says that $\forall x,y,z: xRy\land xRz\to yRz$. Taking $R'=\bar{R}$ we have $\forall x,y,z:  \neg xR'y\land \neg xR'z\to \neg yR'z$, i.e., $\forall x,y,z: yR'z \to xR'y\lor xR'z$. Our condition is inspired by this observation.} that says if two $i$-accessible worlds do not agree on the value of $c$ then for any third $i$-accessible world it must disagree with one of the two worlds on $c$.\footnote{Careful readers may wonder about whether we can break the the ternary relation into two: the $i$-relation and an anti-equivalence relation. We will come back to this point at the end of the paper.} 

$$\xymatrix@R-20pt@C+30pt{
&t\ar@{-}[dd]|c\\
s \ar[ru]|i \ar[rd]|i\ar[dddd]_i&\\    
&u\\
&\\
&\\
v
}
\xymatrix@R-20pt@C+30pt{
\\
\\    
implies\\
\\
\\
}
\xymatrix@R-20pt@C+30pt{
&t\ar@{-}[dd]|c\ar@{-}[dddddl]|c\\
s \ar[ru]|i \ar[rd]|i\ar[dddd]_i&\\    
&u\\
&\\
&\\
v
}
\xymatrix@R-20pt@C+30pt{
\\
\\    
or \\
\\
\\
}\xymatrix@R-20pt@C+30pt{
&t\ar@{-}[dd]|c\\
s \ar[ru]|i \ar[rd]|i\ar[dddd]_i&\\    
&u\ar@{-}[dddl]|c\\
&\\
&\\
v
}
$$

The new semantics is defined as follows which reflects the intuition behind $R_{i}^{c}$:\\
$$\begin{array}{|lcl|}
\hline
\M,s\Vdash\top & & \text{ always }\\
\M,s\Vdash p &\Lra &s\in V(p)\\
\M,s\Vdash\neg\phi &\Lra & \M,s\not\Vdash\phi\\
\M,s\Vdash\phi\land\psi & \Lra&\M,s\Vdash\phi \text{ and } \M,s\Vdash\psi\\
\M,s\Vdash\Diamond_i\phi &\Lra& \text{ there exists $t$ such that $s\to_i t$ and }\M,t\Vdash\phi\\
\M,s\Vdash\bDia{i}{c}\phi & \Lra&\text{ there exist $u,v$ such that $sR_{i}^{c}uv$, $\M,u\Vdash\phi$ and $\M,v\Vdash\phi$}\\
\hline
\end{array}
$$

In the rest of this section, we show that the above semantics of \MLKvr\ is equivalent to the semantics for \ELKvr\ modulo the syntactic translation $T$. To show this, the difficult part is to saturate an  $\MLKvr$ model with value assignments while keeping the truth values of formulas modulo translation $T$. Note that this is not straightforward, as it is possible in an \MLKvr\ model that $sR_i^ctt$ and there is no way to assign to $c$ two different values on the same world $t$. Moreover, it can happen that $sR_i^cuv$, $s\to_ju$ and $s\to_jv$ while it is not the case that $sR_j^cuv$. However, we can avoid such problem by preprocessing the \MLKvr\ model before assigning valuations. 

\begin{lemma}\label{lem.trans}
For any set of $\ELKvr$ formula $\Sigma\cup\{\phi\}$, $\Sigma\vDash\phi$ iff $T(\Sigma)\Vdash T(\phi).$ 
\end{lemma}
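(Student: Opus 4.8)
The plan is to establish the two halves of the biconditional by passing models back and forth between the first-order semantics $\vDash$ and the new ternary semantics $\Vdash$, with a truth-preservation induction modulo $T$ in each case. The implication $T(\Sigma)\Vdash T(\phi)\Rightarrow\Sigma\vDash\phi$ is the easy one. Given a first-order Kripke model $\M=\lr{S,D,\{\to_i\},V,V_\C}$ and a world $s$ with $\M,s\vDash\psi$ for all $\psi\in\Sigma$, I would build an $\LKvr$ model $\N$ on the same frame and valuation by setting $sR_i^c tu$ iff $s\to_i t$, $s\to_i u$ and $V_\C(c,t)\neq V_\C(c,u)$. The three constraints are immediate: \textit{SYM} is the symmetry of $\neq$, \textit{INCL} is built in, and \textit{ATEUC} holds because if a third $i$-successor $v$ agreed with both $t$ and $u$ on the value of $c$ then $t$ and $u$ would agree, contradicting $sR_i^c tu$. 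A routine induction on $\chi$ then gives $\M,s\vDash\chi$ iff $\N,s\Vdash T(\chi)$, the $\Kv_i$ clause matching exactly the definition of $R_i^c$ once $T(\Kv_i(\phi,c))=\bBox{i}{c}\neg T(\phi)$ is unwound. Applying $T(\Sigma)\Vdash T(\phi)$ at $s$ yields $\M,s\vDash\phi$.

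The substantive implication is $\Sigma\vDash\phi\Rightarrow T(\Sigma)\Vdash T(\phi)$: given an $\LKvr$ model $\M$ and a world $s_0$ satisfying all of $T(\Sigma)$, I must produce a genuine first-order Kripke model $\M'$, rooted at a copy $\sigma_0$ of $s_0$, with $\M',\sigma_0\vDash\chi$ iff $\M,s_0\Vdash T(\chi)$, so that $\Sigma\vDash\phi$ can be invoked. The difficulty is inventing value assignments $V_\C$ that realize $R_i^c$ exactly, i.e.\ with $V_\C(c,t)\neq V_\C(c,u)$ precisely when the $i$-accessible worlds $t,u$ satisfy $sR_i^c tu$. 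This cannot be done on $\M$ itself, for the two reasons noted after the definition: a self-loop $sR_i^c tt$ cannot be mimicked by a single world carrying two values, and the same pair $t,u$ may be judged differently by two sources $s,s'$, whereas $V_\C(c,t)\neq V_\C(c,u)$ is source-independent.

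I would dispose of both obstructions by a tree unraveling with duplication. Let $\M'$ be the tree of finite agent-labelled paths out of $s_0$, but record every one-step extension of a node $\sigma$ ending at $w$ along $w\to_i w'$ twice, as $\sigma\cdot(i,w',0)$ and $\sigma\cdot(i,w',1)$, each inheriting the valuation of $w'$. Unraveling gives every node a unique parent and a unique incoming agent, so any pair of distinct nodes is jointly $i$-accessible from at most one world, which removes the source-inconsistency; and the two copies supply, for any $w'$ with $wR_i^c w'w'$, a pair of distinct equally-valued successors on which $c$ may differ. To define $V_\C$, fix $\sigma$ ending at $w$ and an agent $i$, and declare $\sigma\cdot(i,w_1,k_1)\mathrel{D}\sigma\cdot(i,w_2,k_2)$ for \emph{distinct} $i$-children iff $wR_i^c w_1 w_2$. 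The crux is that the complement of $D$ is an equivalence relation on the $i$-children of $\sigma$: reflexivity is free since $D$ relates only distinct children, symmetry is \textit{SYM}, and transitivity is the contrapositive of \textit{ATEUC}, obtained by applying the anti-euclidean condition at $w$ to the pair $(w_1,w_3)$ with the auxiliary $i$-successor $w_2$. I then colour each $D$-class with a distinct value, drawing from disjoint ranges of a sufficiently large constant domain $D$ for different groups $(\sigma,i)$, so that $V_\C(c,\tau)\neq V_\C(c,\tau')$ iff $\tau\mathrel{D}\tau'$ for siblings, while non-siblings are never jointly $i$-accessible and so never compared.

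Finally I would run the induction showing $\M',\sigma\vDash\chi$ iff $\M,w\Vdash T(\chi)$ for each node $\sigma$ ending at $w$. The Boolean and $\K_i$ clauses are the standard unraveling argument, both copies of a successor behaving alike; in the $\Kv_i(\phi,c)$ clause the coloring property turns ``two equally-valued $\phi$-successors'' into the existence of an $R_i^c$-pair of $\phi$-worlds, with the duplication furnishing the witness when $wR_i^c w'w'$. Evaluating at $\sigma_0$ and applying $\Sigma\vDash\phi$ gives $\M,s_0\Vdash T(\phi)$, hence $T(\Sigma)\Vdash T(\phi)$. I expect the anti-equivalence check for $D$ — the one place where \textit{SYM} and \textit{ATEUC} are exactly what is required — to be the technical heart, and the bookkeeping of the duplicated unraveling to be the main source of routine care.
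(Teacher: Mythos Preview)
Your proposal is correct and follows essentially the same route as the paper: the easy direction is identical, and for the hard direction both you and the paper duplicate worlds to handle $sR_i^c tt$, unravel into a tree to make predecessors unique, and then assign values via equivalence classes whose transitivity comes precisely from \textit{ATEUC}. The only difference is cosmetic---the paper performs the duplication and the unraveling as two separate preprocessing steps and defines the equivalence $\sim$ globally on $\C\times W$, whereas you fold the duplication into the unraveling and describe the equivalence locally on sibling sets; the underlying argument is the same.
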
   
\begin{proof}
It suffices to prove that for any set of $\ELKvr$ formula $\Sigma$, $\Sigma$ is $\vDash$-satisfiable iff $T(\Sigma)$ is $\Vdash$-satisfiable. We say that an $\ELKvr$ model $\M,s$ is \textit{equivalent} to an $\MLKvr$ model $\N,t$ if for all $\phi\in\ELKvr$: $\M,s\vDash \phi\iff \N,t \Vdash T(\phi).$  In the following we show that for any pointed FO Kripke model $\M,s$ for \ELKvr, there is an equivalent $\MLKvr$ model $\N,t$, and vice versa. 

$\Rightarrow$: For any pointed FO Kripke model $\M,s$, we can naturally define the ternary relation $R_i^c$ as follows: $sR_i^c tu$ iff $s\to_i t$, $s\to_i u$ and $V_C(c,t)\not=V_C(c,u)$. It's straightforward to check that the resulting model $\N$ is an $\LKvr$ model (satisfying the three conditions) and $\N,s$ is equivalent to $\M,s$.

$\Leftarrow$: Recall that what we need to show is the following: given an $\LKvr$ model $\N=\lr{S,\{\to_i:i\in \I\},\{R_i^c:i\in\I,c\in \C\},V}$ and $t\in S$, find an \ELKvr\ model $\M,s$ such that $\M,s\vDash \phi\iff \N,t \Vdash T(\phi).$ As mentioned, we need to preprocess $\N$ before assigning values. 

The preprocessing consists of two steps: splitting and unraveling. We first split the states in $\N$ into two copies in order to handle the $uR_{i}^{c}vv$ problem mentioned before. Let $\N'$ be $\lr{S\times\{0,1\},\{\to'_i:i\in \I\},\{P_i^c:i\in\I,c\in \C\},V'}$, where:

\begin{itemize}
\item $(u,x)\to'_i (v,y) \iff u\to_i v$
\item $(u,x)P_i^c(v,y)(w,z) \iff uR_i^c vw$ and $(v,y)\not=(w,z)$
\item $V'((u,x))=V(u)$
\end{itemize}

It can be verified that $\N'$ has the three properties of \MLKvr\ models,\footnote{Take the anti-euclidean property as an example. Suppose $(u,x)P_i^c(v,y)(v',y')$ and $(u,x)\to_i'(w,z)$. If $(w,z)$ is one of $(v,y)$ and $(v',y')$, done. If not, then $(u,x)P_i^c(v,y)(v',y')$ implies $uR_i^c vw$, and $(u,x)\to_i'(w,z)$ implies $u\to_i w$. By the anti-euclidean property of the original model $\N$, we have either $uR_i^c vw$ or $uR_i^c v'w$. So either $(u,x)R_i^c (v,y)(w,z)$ or $(u,x)R_i^c (v',y')(w,z)$.} and there is no state $v$ such that $uP_i^c vv$ for any $u, v$. We can prove the following claim by a simple induction on the structure of $\MLKvr$ formulas: 
$$ \N,u\equiv_{\LKvr}\N',(u,x) \text{ where $x\in\{0,1\}$}$$
The only non-trivial case is when $\phi=\bDia{i}{c}\psi$. Suppose $\N,u\Vdash\bDia{i}{c}\psi$, then there exist $v,v'\in S$ ($v$ and $v'$ are not necessarily different) such that $uR_i^c vv'$, $\N,v\Vdash\psi$ and $\N,v'\Vdash\psi$. By the definition of $P_i^c$ and the induction hypothesis, $(u,x)P_i^c(v,0)(v',1)$, $\N',(v,0)\Vdash\psi$ and $\N',(v',1)\Vdash\psi$, so $\N',(u,x)\Vdash\bDia{i}{c}\psi$. Suppose $\N',(u,x)\Vdash\bDia{i}{c}\psi$, then there exist $(v,y)(v',z)$ such that $(u,x)P_i^c(v,y)(v',z)$, $\N,(v,y)\Vdash\psi$ and $\N,(v',z)\Vdash\psi$. According to the definition of $P_i^c$, this entails $(v,y)\not=(v',z)$ and $uR_i^c vv'$. By induction hypothesis, $\N,(v,y)\Vdash\psi$ and $\N,(v',z)\Vdash\psi$, so $\N,u\Vdash\bDia{i}{c}\psi$.  As a simple consequence, 
$$\N,s\equiv_{\LKvr}\N',(s,0)\eqno(1)$$

To simplify notation, we shall write $(s,0)$ as $s'$ in the rest of this proof.


Now we unravel $\N'$ at $s'$ into $\M'=\lr{W,\{\To_i:i\in I\},\{Q_i^c:i\in\I,c\in\C\},U'}$:
\begin{itemize}
\item $W=\{\lr{s',i_1,v_1,\ldots,i_k,v_k}:$ there is a path $s'\rel{i_1} v_1\dots\rel{i_k} v_k$ in $\N'\}$. Note that the trivial path $\lr{s'}\in W$,
\item $\lr{s',i_1,\dots,v_k}\To_i \lr{s',j_1,\dots,u_m}$ iff $m=k+1, \lr{s',i_1,\dots,v_k}=\lr{s',j_1,\dots,u_k}, j_m=i$ and $v_k\to_i' u_m$ in $\N'$,
\item $\lr{s',i_1,\dots,v_k}Q_i^c \lr{s',j_1,\dots,u_m}\lr{s',l_1,\dots,l_n}$ iff $v_k P_i^c u_m l_n$, $\lr{s',i_1,\dots,v_k}\To_i \lr{s',j_1,\dots,u_m}$ and $\lr{s',i_1,\dots,v_k}\To_i\lr{s',l_1,\dots,l_n}$,
\item $U'(\lr{s',i_1,\dots,u})=V'(u)$.
\end{itemize}

Intuitively, the new model $\M'$ starts from $\lr{s'}$, and each state corresponds to a path which is accessible from $s'$ in $\N'$. It is not hard to verify the three properties of $\MLKvr$ models.\footnote{Again, take the anti-euclidean property as an example. Suppose $uQ_i^c vv'$, $u\To_i t$. Suppose $u=\lr{s',\dots,u_k}$, $v=\lr{s',\dots,v_m}$, $v'=\lr{s',\dots,v'_n}$, $w=\lr{s',\dots,w_l}$. Then $u_k P_i^c v_mv'_n$ and $u_k\to_i'w_l$, which implies at least one of $u_k P_i^c v_mw_l$ and $u_k P_i^c v'_nw_l$ holds. This together with $u\To_i v$, $u\To_i v'$ and $u\To_i w$ imply either $uQ_i^c vw$ or $uQ_i^c v'w$.} By definition, the $\To$ skeleton of $\M'$ is a tree-like structure: acyclic, every state except the root $\lr{s'}$ can be reached eventually by $\lr{s'}$ and has one and only one predecessor. It follows that for any $u,v$ in $\M'$, it is not the case that $u\To_iv$ and $u\To_jv$ for any $i\not=j$.

Now we can prove the following by induction on the structure of $\phi\in\MLKvr$: $\M',\lr{s',i_1,v_1,\ldots,i_k,v_k}\equiv_{\MLKvr} \N',v_k$
In particular, $$\N',s'\equiv_{\MLKvr}\M',\lr{s'}\eqno(2)$$

Now the only thing left is to transform the $\LKvr$ model $\M'$ into an equivalent \ELKvr\ model $\M$. Basically, we just need to give values to $c\in\C$ on each state according to the ternary relations $Q_j^c$. Let $\M=\lr{W,D,\{\To_i:i\in\I\},U,V_\C}$ where:
\begin{itemize}
\item $W$ and $\{\To_i:i\in\I\}$ are exactly the same as in $\M'$;
\item $U=U'$;
\item $V_\C(c,w)=|(c,w)|_\sim$. That is, $V_\C(c,w)$ is the equivalence class under the equivalence relation $\sim$ over $\C\times W$ defined as: \\
${\sim}=\{\lr{(c,u),(e,v)}:c=e, \exists s \exists j:  s\To_j u, s\To_j v, \forall w\in W: \neg wQ_j^c uv \} \cup\{\lr{(c,u),(c,u)}\mid (c,u)\in \C\times W\}$

\item  $D=\{|(c,w)|_\sim\mid (c,w)\in \C\times W\}$;
\end{itemize}

To make sure $\M$ is well-defined, we need to show that $\sim$ is an equivalence relation. Reflexivity and symmetry are obvious, and for transitivity: 
If $(c,w)\sim(d,u)$, $(d,u)\sim(e,v)$, then $c=d=e$, there exist $s$, $i$ such that $s\To_i w$, $s\To_i u$ while for any $t$ not $tQ_i^cwu$, and there exist $s'$, $j$ such that $s'\To_j u$, $s'\To_j v$ while for any $t$ not $tQ_j^c uv$. Since every state in $W$ has at most one predecessor, $s=s'$. Since there is at most one relation between two different states, $i=j$. Therefore $s\To_i w$, $s\To_i v$ and $s\To_i u$. Suppose towards contradiction that there exists $o\in W$ such that $oQ_i^c wv$, then $o=s$. Thus $s Q_i^c wu$ or $s Q_i^c uv$ by anti-euclidean property, contradiction. Therefore $(c,w)\sim(e,v)$. 

We still need to verify that this assignment is good, in the sense that: for any \ELKvr\ formula $\phi$, $\M',w\Vdash T(\phi) \iff \M,w\vDash \phi$ for any $w\in W$. We prove this by induction on $\phi$ and only show the non-trivial case:





If $\phi=\Kv_i(\psi,c)$, then $T(\phi)=\bBox{i}{c} \neg T(\psi)$.

$\Rightarrow$: Suppose $\M,w\not\vDash\Kv_i(\psi,c)$ then there exist $t,t'$ such that $w\To_i t$, $w\To_i t$, $\M,t\vDash\psi$, $\M,t'\vDash\psi$ and $(c,t)\not\sim(c,t')$. According to the definition of $\sim$, this implies $\exists u$ such that $uQ_i^c tt'$. But we have shown that every state has exactly one predecessor, so $u=w$, and $wQ_i^c tt'$. By induction hypothesis, $\M',t\Vdash T(\psi)$ and $\M',t'\Vdash T(\psi)$. Therefore, $\M',w\Vdash\bDia{i}{c} T(\psi)$, i.e., $\M',w\not\vDash\bBox{i}{c} \neg T(\psi)$.

$\Leftarrow$: Suppose $\M',w\not\Vdash\bBox{i}{c} \neg T(\psi)$, i.e. $\M',w\Vdash\bDia{i}{c} T(\psi)$.  Then there exist $t,t'\in W$ such that $wQ_i^c tt'$, $\M',t\Vdash T(\psi)$ and $\M',t'\Vdash T(\psi)$. So $w\To_i t, w\To_i t'$ but $(c,t)\not\sim(c,t')$, i.e. $V_\C(c,t)\not=V_\C(c,t')$. By induction hypothesis, $\M,t\vDash\psi$ and $\M,t'\vDash\psi$. Therefore, $\M,w\not\vDash\Kv_i(\psi,c)$.

It follows that for any $\ELKvr$ formula $\phi$, 
$$\M',\lr{s'}\Vdash T(\phi)\iff \M,\lr{s'}\vDash\phi\eqno(3)$$

With (1), (2) and (3), we can now conclude that for any $\MLKvr$ model $\N,t$ there is always an equivalent $\ELKvr$ model $\M,s$ and this concludes the proof.
\end{proof}

\begin{remark}
The above lemma implies that for any $\ELKvr$ formula $\phi$: $$\vDash\phi \iff \Vdash T(\phi)$$ which asserts the validities are the same modulo the translation. We need the stronger version to handle strong completeness later.
\end{remark}

\section{Completeness of \SMLKvr}

\label{Sec.comp}
In this section, we show a direct proof of the strong completeness of \SMLKvr\ proposed in the previous section. As we will see, this proof is much simpler compared to the original completeness proof of $\SELKvr$ in \cite{WangF14} due to the fact that we do not need to construct a FO canonical Kripke model with value assignments anymore. 

\begin{definition}
The canonical model of $\SMLKvr$ is a tuple $$\M=\lr{S,\{\to_i:i \in\I\},\{R_i^c :i\in\I,c\in\C\},V}$$ \noindent where:
\begin{itemize}
\item $S$ is the set of all maximal \SMLKvr-consistent sets of \LKvr formulas,
\item $s\to_i t \iff \{\phi:\Box_i\phi\in s\}\subseteq t$,
\item $sR_i^c tu \iff$ (1)$\{\phi:\Box_i\phi\in s\}\subseteq t\cap u$ and (2)$\{\psi:\bBox{i}{c}\psi\in s\}\subseteq t\cup u$,
\item $V(s)=\{p:p\in s\}$.
\end{itemize}
\end{definition}
Note that condition (2) for $R^c_i$ says that if $s$ can see two $i$-accessible worlds which do not agree on $c$ then at least one should satisfy $\psi$ for each $\Kv_i(\neg\psi, c) \in s$. 
\begin{proposition}
The canonical model $\M$ is an $\MLKvr$ model.
\end{proposition}
\begin{proof}
We only need to check the three conditions of $R_i^c$. 
\begin{enumerate}
\item $sR_i^c uv\Rightarrow sR_i^c vu$: Obvious.
\item $sR_i^c uv \Rightarrow s\to_i u$: By condition (1) in the definition of $R_i^c$.
\item $sR_i^c uv$ and $s\to_i t \Rightarrow$ either $sR_i^c ut$ or $sR_i^c tv$: Suppose not. Then according to the definition of $R_{i}^{c}$, we have $\{\psi:\bBox{i}{c}\psi\in s\}\not\subseteq u\cup t$ and $\{\psi:\bBox{i}{c}\psi\in s\}\not\subseteq v\cup t$. So there exist $\psi_1,\psi_2\in \{\psi:\bBox{i}{c}\psi\in s\}$ such that $\psi_1\not\in u\cup t$ and $\psi_2\not\in v\cup t$. According to the property of maximal consistent sets, this entails $\psi_1\land\psi_2\not\in u\cup t$ and $\psi_1\land\psi_2\not\in v\cup t$. Now, we distinguish two situations: $\Diamond_i(\neg\psi_1\land\neg\psi_2)\in s$ and $\Diamond_i(\neg\psi_1\land\neg\psi_2)\not\in s$, and go on to show that in both cases we would arrive at contradiction.

Suppose $\Diamond_i(\neg\psi_1\land\neg\psi_2)\in s$. Note that since $\psi_1,\psi_2\in \{\psi:\bBox{i}{c}\psi\in s\}$, $\bBox{i}{c}\psi_1\in s$ and $\bBox{i}{c}\psi_2\in s$. Then according to $\KvrDIS$, we have $\bBox{i}{c}(\psi_1\land\psi_2)\in s$. So $\psi_1\land\psi_2\in\{\psi:\bBox{i}{c}\psi\in s\}$. Since $\{\psi:\bBox{i}{c}\psi\in s\}\subseteq u\cup v$, $\psi_1\land\psi_2\in u\cup v$. But this means that $\psi_1\land\psi_2\in u\cup t$ or $v\cup t$, contradiction.

Suppose $\Diamond_i(\neg\psi_1\land\neg\psi_2)\not\in s$, then $\Box_{i}(\psi_1\vee\psi_2)\in s$. According to the definition of $R_{i}^{c}$, we have $\psi_1\vee\psi_2\in t$. By the property of MCS, at least one of  $\psi_1$ and $\psi_2$ is in $t$. However, since $\psi_1\not\in u\cup t$ and $\psi_2\not\in v\cup t$, we have $\psi_1,\psi_2\not\in t$, contradiction.
\end{enumerate}
Therefore, the canonical model $\M$ is indeed an $\MLKvr$ model.
\end{proof}

By a Lindenbaum-like argument, every consistent set of $\MLKvr$ formulas can be extended to a maximal consistent set (of \MLKvr\ formulas). In the following we (as routine) prove the existence lemma for both modalities $\Dia_i$ and $\bDia{i}{c}$ in order to obtain the truth lemma. The proof is the $\SMLKvr$ adaption of the proof of $\SELKvr$ in \cite{WangF14}. 

Given a state $s\in S$ such that $\bDia{i}{c}\phi\in s$. We let $Z=\{\psi\mid \Box_i\psi\in s\}\cup\{\phi\}$ and $X=\{\chi \mid \bBox{i}{c}\chi\in s\}$. Since $X$ is countable, we list the elements in $X$ as $\chi_i$ for $i\in\natN$.
Note that since $\vdash\bBox{i}{c}\top$, $\top\in X$, namely $X$ is non-empty.

\begin{fact}
For any $\chi\in X$, $\{\chi\}\cup Z$ is consistent. Therefore $Z$ and every $\chi$ are also consistent.
\end{fact}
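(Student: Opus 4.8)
The plan is to argue by contradiction, using the axiom \DISTKvr\ to transfer a box-necessitated implication across the $\bBox{i}{c}$ modality. The two facts about $s$ that I will play against each other are: $\chi\in X$ means $\bBox{i}{c}\chi\in s$, while $\bDia{i}{c}\phi\in s$ means, by the definition of $\bDia{i}{c}$ as $\neg\bBox{i}{c}\neg$, exactly that $\bBox{i}{c}\neg\phi\notin s$. So it suffices to show that the inconsistency of $\{\chi\}\cup Z$ would force $\bBox{i}{c}\neg\phi\in s$.

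First I would suppose, towards a contradiction, that $\{\chi\}\cup Z$ is inconsistent for some $\chi\in X$. Since any derivation uses only finitely many premises, there are finitely many $\psi_1,\dots,\psi_n$ with each $\Box_i\psi_j\in s$ such that $\vdash\neg(\chi\land\phi\land\psi_1\land\cdots\land\psi_n)$. Writing $\psi:=\psi_1\land\cdots\land\psi_n$, this is propositionally equivalent to $\vdash\psi\to(\chi\to\neg\phi)$.

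Next I would lift this to the level of $\Box_i$. Applying \GENK\ and then the distribution axiom \DISTK\ to $\vdash\psi\to(\chi\to\neg\phi)$ yields $\vdash\Box_i\psi\to\Box_i(\chi\to\neg\phi)$. Since each $\Box_i\psi_j\in s$ and $\Box_i$ distributes over finite conjunction (again via \GENK\ and \DISTK), we have $\Box_i\psi\in s$, so maximal consistency of $s$ gives $\Box_i(\chi\to\neg\phi)\in s$. Now the instance $\Box_i(\chi\to\neg\phi)\to(\bBox{i}{c}\chi\to\bBox{i}{c}\neg\phi)$ of \DISTKvr\ lies in $s$; two applications of modus ponens inside $s$ — first discharging $\Box_i(\chi\to\neg\phi)\in s$, then discharging $\bBox{i}{c}\chi\in s$ — produce $\bBox{i}{c}\neg\phi\in s$, contradicting $\bDia{i}{c}\phi\in s$. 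Hence $\{\chi\}\cup Z$ is consistent. The closing sentence is then immediate: $Z$ and each singleton $\{\chi\}$ are subsets of the consistent set $\{\chi\}\cup Z$, and $X\neq\emptyset$ since $\top\in X$.

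I expect the only delicate point to be the bookkeeping around the finite conjunction $\psi$ and the purely propositional rearrangement in the first step; everything afterwards is a mechanical chain of maximal-consistent-set closures. The genuine conceptual crux — and the reason the \textbf{Fact} holds at all — is recognizing that \DISTKvr\ is precisely the principle that lets a $\Box_i$-necessitated implication pass through the $\bBox{i}{c}$ operator, which is exactly what converts the available $\bBox{i}{c}\chi\in s$ into the forbidden $\bBox{i}{c}\neg\phi\in s$.
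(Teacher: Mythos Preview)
Your proof is correct and follows essentially the same approach as the paper: both argue by contradiction, extract a finite inconsistency, necessitate and distribute to obtain a $\Box_i$-implication in $s$, and then apply \DISTKvr\ to reach a contradiction with $\bDia{i}{c}\phi\in s$. The only cosmetic difference is that the paper rearranges to $\Box_i(\phi\to\neg\chi)$ and uses the contrapositive (diamond) form of \DISTKvr\ to derive $\bDia{i}{c}\neg\chi\in s$ against $\bBox{i}{c}\chi\in s$, whereas you rearrange to $\Box_i(\chi\to\neg\phi)$ and use \DISTKvr\ directly to derive $\bBox{i}{c}\neg\phi\in s$ against $\bDia{i}{c}\phi\in s$.
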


\begin{proof}
Suppose not, then there exists $\chi\in X$, $\psi_1,\dots,\psi_n\in Z$ such that $\vdash\psi_1\land\dots\land\psi_n\land\phi\to\neg\chi$. By $\GENK$ and $\DISTK$, we have $\vdash\Box_i(\psi_1\land\dots\land\psi_n)\to\Box_i(\phi\to\neg\chi)$. Since $\Box_i\psi_1,\dots,\Box_i\psi_n\in s$, $\Box_i(\phi\to\neg\chi)\in s$. Note that $\KvrDIS$ is equivalent to $\Box_i(p\to q)\to(\bDia{i}{c}p\to\bDia{i}{c}q)$. By \SUB, we have $\vdash\Box_i(\phi\to\neg\chi)\land\bDia{i}{c}\phi\to\bDia{i}{c}\neg\chi$. This together with the fact that $\Box_i(\phi\to\neg\chi)\in s$ and $\bDia{i}{c}\phi\in s$ (assumption), we have $\bDia{i}{c}\neg\chi\in s$, contradiction. Since $\top\in X$, $\{\top\}\cup Z$ is consistent thus $Z$ is consistent. 
\end{proof}



Let $B_0=Z\cup\{\chi_0\}$, $C_0=Z$. We inductively construct $B_n$ and $C_n$ as following:
\begin{itemize}
\item If $B_n\cup\{\chi_{n+1}\}$ is consistent, then $B_{n+1}=B_n\cup\{\chi_{n+1}\}$, $C_{n+1}=C_n$.

\item Else, $B_{n+1}=B_n$, $C_{n+1}=C_n\cup\{\chi_{n+1}\}$.

\item Finally, let $B=\bigcup_{n<\omega}B_n$, $C=\bigcup_{n<\omega}C_n$.
\end{itemize}
In order to show that $B$ and $C$ are consistent we first show that $B_n$ and  $C_n$ are consistent for each $n<\omega$. Before that we prove a handy proposition which will be useful soon.

\begin{proposition}\label{genKvr}
For any finitely many \MLKvr\ formulas $\psi_1,\dots,\psi_n$, we have: 
$$\vdash\bBox{i}{c}\psi_1\land\cdots\land\bBox{i}{c}\psi_n \land \Dia_i(\neg\psi_1\land\cdots\land\neg\psi_n)\to\bBox{i}{c}(\psi_1\land\cdots\land\psi_n)$$
\end{proposition}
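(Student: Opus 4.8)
The plan is to prove the statement by induction on $n$, using the observation that its two-formula instance ($n=2$) is nothing but the axiom $\KvrDIS$ in disguise. Indeed, unwinding the dualities $\bBox{i}{c}\phi=\neg\bDia{i}{c}\neg\phi$ and $\Box_i\phi=\neg\Dia_i\neg\phi$ and applying $\RE$ (for double negations and De Morgan) together with $\SUB$, the axiom $\KvrDIS$, namely $\Dia_i(p\land q)\land\bDia{i}{c}(p\lor q)\to(\bDia{i}{c}p\lor\bDia{i}{c}q)$, is propositionally equivalent to $\bBox{i}{c}\psi_1\land\bBox{i}{c}\psi_2\land\Dia_i(\neg\psi_1\land\neg\psi_2)\to\bBox{i}{c}(\psi_1\land\psi_2)$, which is precisely the $n=2$ case. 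The base case $n=1$ is a propositional tautology (from \TAUT), since the implication reduces to $\bBox{i}{c}\psi_1\land\Dia_i(\neg\psi_1)\to\bBox{i}{c}\psi_1$.

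For the inductive step, suppose the claim holds for $n$ and write $\Psi=\psi_1\land\cdots\land\psi_n$. First I would record that $\Dia_i$ is monotone in this system: from $\DISTK$ and $\GENK$ one derives, whenever $\vdash\phi\to\phi'$, that $\vdash\Dia_i\phi\to\Dia_i\phi'$ by standard normal-modal reasoning. Now assume the antecedent $\bBox{i}{c}\psi_1\land\cdots\land\bBox{i}{c}\psi_{n+1}\land\Dia_i(\neg\psi_1\land\cdots\land\neg\psi_{n+1})$. Since $\neg\psi_1\land\cdots\land\neg\psi_{n+1}$ propositionally entails $\neg\psi_1\land\cdots\land\neg\psi_n$, monotonicity of $\Dia_i$ yields $\Dia_i(\neg\psi_1\land\cdots\land\neg\psi_n)$; combined with $\bBox{i}{c}\psi_1\land\cdots\land\bBox{i}{c}\psi_n$ and the induction hypothesis, this gives $\bBox{i}{c}\Psi$.

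Next, note that $\neg\psi_1\land\cdots\land\neg\psi_{n+1}$ also entails $\neg\Psi\land\neg\psi_{n+1}$ (it entails $\neg\psi_1$, hence the disjunction $\neg\Psi=\neg\psi_1\lor\cdots\lor\neg\psi_n$, and it entails $\neg\psi_{n+1}$), so monotonicity of $\Dia_i$ again gives $\Dia_i(\neg\Psi\land\neg\psi_{n+1})$. Applying the two-formula case (the $\KvrDIS$-instance obtained by $\SUB$ with $p\mapsto\Psi$ and $q\mapsto\psi_{n+1}$) to $\bBox{i}{c}\Psi$, $\bBox{i}{c}\psi_{n+1}$ and $\Dia_i(\neg\Psi\land\neg\psi_{n+1})$ then delivers $\bBox{i}{c}(\Psi\land\psi_{n+1})$, which by $\RE$ equals $\bBox{i}{c}(\psi_1\land\cdots\land\psi_{n+1})$, completing the step.

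The only point requiring care — the would-be obstacle — is ensuring that the single diamond premise $\Dia_i(\neg\psi_1\land\cdots\land\neg\psi_{n+1})$ is strong enough to discharge \emph{both} diamond hypotheses needed along the way: the one feeding the induction hypothesis and the one feeding the two-formula combination step. Both follow because the full conjunction of negations entails each of the weaker formulas and $\Dia_i$ is monotone. Notably, no appeal is made to the non-valid principle $\bDia{i}{c}(p\lor q)\to(\bDia{i}{c}p\lor\bDia{i}{c}q)$, which is exactly why it is the $\Dia_i$-guarded form of distribution that survives and generalizes.
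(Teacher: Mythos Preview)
Your proof is correct and follows essentially the same route as the paper: both identify the two-formula instance as (a contraposed, substituted form of) $\KvrDIS$, then iterate it, using monotonicity of $\Dia_i$ to weaken the single diamond premise $\Dia_i(\neg\psi_1\land\cdots\land\neg\psi_{n+1})$ into the shapes required at each step. The only difference is presentational---you package the iteration as a formal induction on $n$, while the paper writes out the chain of intermediate theorems explicitly---but the underlying argument is the same.
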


\begin{proof}
From $\KvrDIS$ we have $\vdash\bBox{i}{c}\phi\land\bBox{i}{c}\psi\land\Dia_i(\neg\phi\land\neg\psi)\to\bBox{i}{c}(\phi\land\psi)$. What we want now is its generalization. By applying $\KvrDIS$, we can have the following sequence of $\SMLKvr$ theorems:
\begin{align*}
\vdash & \bBox{i}{c}\psi_1\land\bBox{i}{c}\psi_2\land\Dia_i(\neg\psi_1\land\neg\psi_2)\to\bBox{i}{c}(\psi_1\land\psi_2) \\
\vdash & \bBox{i}{c}(\psi_1\land\psi_2)\land \bBox{i}{c}\psi_3 \land \Dia_i(\neg(\psi_1\land \psi_2)\land\neg\psi_3) \to \bBox{i}{c}(\psi_1\land\psi_2\land\psi_3) \\
& \cdots \\
\vdash & \bBox{i}{c}(\psi_1\land\cdots\land\psi_{n-1})\land\bBox{i}{c}\psi_n \land \Dia_i(\neg(\psi_1\land\cdots\land\psi_{n-1})\land\neg\psi_n)\to\bBox{i}{c}(\psi_1\land\cdots\land\psi_n)
\end{align*}
Note that we have $\vdash\Dia_i(\neg\psi_1\land\cdots\land\neg\psi_n) \to \Dia_i(\neg(\psi_1\land\psi_{k-1})\land \neg \psi_k)$ in standard normal modal logic, for any $2\leq k\leq n$. Now by using the above theorems one by one we can obtain:
$$\vdash\bBox{i}{c}\psi_1\land\cdots\land\bBox{i}{c}\psi_n \land \Dia_i(\neg\psi_1\land\cdots\land\neg\psi_n)\to\bBox{i}{c}(\psi_1\land\cdots\land\psi_n)$$
\end{proof}

\begin{proposition}\label{prop.consis}
For any $k\geq 0$, if $B_k$ is consistent and $\chi_{k+1}$ is not consistent with $B_k$, then $\chi_{k+1}$ is consistent with $C_k$. Therefore $B_k$ and $C_k$ are consistent for $k\in \mathbb{N}$.
\end{proposition}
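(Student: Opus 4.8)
The plan is to prove the displayed conditional first and then read off the consistency of $B_k$ and $C_k$ by a routine induction on $k$. For the base case, $B_0=Z\cup\{\chi_0\}$ and $C_0=Z$ are consistent by the Fact established above. For the inductive step, assuming $B_k,C_k$ consistent: if $\chi_{k+1}$ is consistent with $B_k$ we accept it, so $B_{k+1}$ is consistent by construction and $C_{k+1}=C_k$ is consistent by hypothesis; if $\chi_{k+1}$ is inconsistent with $B_k$ we keep $B_{k+1}=B_k$ and invoke the conditional to conclude $C_{k+1}=C_k\cup\{\chi_{k+1}\}$ is consistent. So everything reduces to the conditional, which is the real content.

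\textbf{Setting up the conditional.} To prove it, the key is to exploit that $B$ only ever grows: if some formula was placed in $C$ at an earlier stage $j\le k$, it was because it was inconsistent with $B_{j-1}\subseteq B_k$, hence it is already inconsistent with $B_k$. Thus every formula in $C_k\setminus Z$, together with $\chi_{k+1}$, is inconsistent with $B_k$. I would list these as $\rho_1,\dots,\rho_m$ (with $\rho_m=\chi_{k+1}$); each $\rho_l\in X$, so $\bBox{i}{c}\rho_l\in s$. Since $C_k\cup\{\chi_{k+1}\}=\{\psi:\Box_i\psi\in s\}\cup\{\phi,\rho_1,\dots,\rho_m\}$, by the existence lemma for $\Dia_i$ it suffices to show $\Dia_i(\phi\wedge\rho_1\wedge\cdots\wedge\rho_m)\in s$.

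\textbf{The derivation.} I would obtain this in three moves. First, from $B_k$ consistent the existence lemma gives $\Dia_i(\phi\wedge\beta_k)\in s$, where $\beta_k$ is the conjunction of the formulas accepted into $B_k$; and each inconsistency $\vdash\theta_l\wedge\phi\wedge\beta_k\wedge\rho_l\to\bot$ (with $\Box_i\theta_l\in s$) yields, after collecting the $\theta_l$ into a single $\theta$, that $\vdash\theta\wedge\phi\wedge\beta_k\to\neg\rho_1\wedge\cdots\wedge\neg\rho_m$; using $\Box_i\theta\in s$ to fold $\theta$ into the diamond and monotonicity of $\Dia_i$, this promotes $\Dia_i(\phi\wedge\beta_k)$ to $\Dia_i(\neg\rho_1\wedge\cdots\wedge\neg\rho_m)\in s$. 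Second, feeding this together with $\bBox{i}{c}\rho_1,\dots,\bBox{i}{c}\rho_m\in s$ into Proposition \ref{genKvr} gives $\bBox{i}{c}(\rho_1\wedge\cdots\wedge\rho_m)\in s$. Third, a one-line consequence of $\DISTKvr$, namely $\vdash\bDia{i}{c}p\wedge\bBox{i}{c}q\to\Dia_i(p\wedge q)$, applied to $p=\phi$ (recall $\bDia{i}{c}\phi\in s$) and $q=\rho_1\wedge\cdots\wedge\rho_m$, delivers $\Dia_i(\phi\wedge\rho_1\wedge\cdots\wedge\rho_m)\in s$, as required.

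\textbf{Main obstacle.} The delicate point is the second move: to invoke Proposition \ref{genKvr} one must first secure its antecedent $\Dia_i(\neg\rho_1\wedge\cdots\wedge\neg\rho_m)\in s$, and this is exactly where the greedy bookkeeping pays off, since the consistency of $B_k$ forces a single $i$-successor refuting all of $\rho_1,\dots,\rho_m$ at once. The bridging fact $\vdash\bDia{i}{c}p\wedge\bBox{i}{c}q\to\Dia_i(p\wedge q)$ used in the third move is a routine contraposition of $\DISTKvr$ and should cause no difficulty.
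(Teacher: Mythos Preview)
Your proof is correct and follows essentially the same route as the paper's: both hinge on the observation that every formula in $C_k\setminus Z$ (and $\chi_{k+1}$) is inconsistent with $B_k$, both extract $\Dia_i(\neg\rho_1\wedge\cdots\wedge\neg\rho_m)\in s$ from the consistency of $B_k$, and both feed this into Proposition~\ref{genKvr} to obtain $\bBox{i}{c}(\rho_1\wedge\cdots\wedge\rho_m)\in s$. The only difference is cosmetic: the paper finishes by contradiction, assuming $C_k\cup\{\chi_{k+1}\}$ inconsistent to get $\Box_i(\bigwedge\rho\to\neg\phi)\in s$ and then applying $\DISTKvr$ to reach $\bBox{i}{c}\neg\phi\in s$, whereas you finish directly via the contrapositive form $\bDia{i}{c}p\wedge\bBox{i}{c}q\to\Dia_i(p\wedge q)$ to land on $\Dia_i(\phi\wedge\bigwedge\rho)\in s$. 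These two closing steps are literally contrapositives of one another, so the arguments coincide in content.
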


\begin{proof}
Suppose not, i.e., $\chi_{k+1}$ is not consistent with both $B_k$ and $C_k$. Let $U=B_k\setminus Z$, $V=C_k\setminus Z$, $\UU=\{\neg\psi:\psi\in U\}$, and $\VV=\{\neg\psi:\psi\in V\}$. Then there exist  $\alpha_1,\dots,\alpha_\ell,\beta_1,\dots,\beta_m,\gamma_1,\dots,\gamma_n\in Z$ such that:
\begin{itemize}
\item $\vdash\alpha_1\land\dots\land\alpha_\ell\land\bigwedge U\land\phi\to\neg\chi_{k+1}$
\item $\vdash\beta_1\land\dots\land\beta_m\land\bigwedge V\land\phi\to\neg\chi_{k+1}$
\item $\vdash\gamma_1\land\dots\land\gamma_n\land\bigwedge U\land\phi\to \bigwedge\VV$
\end{itemize}
The last one is due to the fact that any formula in $C_k\setminus Z$ is inconsistent with $B_k$ by construction. 
By $\GENK$, $\DISTK$ and the definition of $Z$ and $X$, we have
\begin{itemize}
\item $\Box_i(\bigwedge U\land\phi\to\neg\chi_{k+1})\in s$
\item $\Box_i(\bigwedge V\land\phi\to\neg\chi_{k+1})\in s$
\item $\Box_i(\bigwedge U\land\phi\to \bigwedge\VV)\in s$
\end{itemize}

First, we claim that $\Dia_i(\bigwedge U\land\phi)\in s$. If not, then $\Box_i\neg(\bigwedge U\land\phi)\in s$, which means that $\neg(\bigwedge U\land\phi)\in Z\subseteq B_k$. But as $U\subseteq B_k$, $\phi\in B_k$, this implies that $B_k$ is inconsistent, contradiction. 

Then, we claim that $\Dia_i(\neg\chi_{k+1}\land\bigwedge\VV)\in s$. Since $\Box_i(\bigwedge U\land\phi\to\neg\chi_{k+1})\in s$ and $\Box_i(\bigwedge U\land\phi\to \bigwedge\VV)\in s$ then $\Box_i(\bigwedge U\land\phi\to\neg\chi_{k+1}\land\bigwedge\VV)$, we immediately get $\Dia_i(\neg\chi_{k+1}\land\bigwedge\VV)\in s$ due to the fact $\Dia_i(\bigwedge U\land\phi)\in s$ that we just showed.

Finally, since $\Dia_i(\neg\chi_{k+1}\land\bigwedge\VV)\in s$, $\bBox{i}{c}\chi_{k+1}\in s$ and $\bBox{i}{c}\psi\in s$ for all $\psi\in V$, by Proposition~\ref{genKvr}, we have $\bBox{i}{c}(\chi_{k+1}\land\bigwedge V)\in s$. But since $\Box_i(\bigwedge V\land\phi\to\neg\chi_{k+1})\in s$, its equivalent $\Box_i(\bigwedge V\land\chi_{k+1}\to\neg\phi)$ is also in $s$. By $\DISTKvr$ we know $\vdash\Box_i(\bigwedge V\land\chi_{k+1}\to\neg\phi)\to \bBox{i}{c}(\bigwedge V\land\chi_{k+1})\to\bBox{i}{c}\neg\phi$. Therefore $\bBox{i}{c}\neg\phi\in s$, contradiction to $\bDia{i}{c}\phi\in s$. 

Now, we can prove that $B_k$ and $C_k$ are consistent for any $k\in\natN$. We do induction on $k$. For $k=0$, then $B_0=C_0=Z$, whose consistency is shown in \textit{Fact 3.3}. For $k=i+1$, consider whether $\chi_{i+1}$ is consistent with $B_i$. If $\chi_{i+1}$ is consistent with $B_i$, then $B_k=B_i\cup\chi_{i+1}$ and $C_k=C_i$ (by induction hypothesis) are consistent. If $\chi_{i+1}$ is inconsistent with $B_i$, then by induction hypothesis, $B_i=B_{i+1}$ and $C_i$ are consistent. So according to the above conclusion $C_{i+1}$ is also consistent.
\end{proof}

\begin{proposition}\label{prop.con}
$B=\bigcup_{n<\omega}B_n$ and $C=\bigcup_{n<\omega}C_n$ are both consistent.
\end{proposition}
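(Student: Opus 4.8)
The plan is to reduce the consistency of the two infinite unions to the consistency of their finite stages, which was already established in Proposition~\ref{prop.consis}. The key structural observation is that the construction produces two \emph{increasing} chains: by definition $B_{n+1}$ is either $B_n\cup\{\chi_{n+1}\}$ or $B_n$, so $B_n\subseteq B_{n+1}$ for every $n<\omega$, and symmetrically $C_{n+1}$ is either $C_n$ or $C_n\cup\{\chi_{n+1}\}$, so $C_n\subseteq C_{n+1}$. Hence $B=\bigcup_{n<\omega}B_n$ and $C=\bigcup_{n<\omega}C_n$ are the directed unions of these two nested chains.

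First I would recall that consistency in $\SMLKvr$ is a finitary notion: a set $\Gamma$ is inconsistent exactly when $\vdash\neg(\delta_1\land\cdots\land\delta_k)$ for some finite subset $\{\delta_1,\dots,\delta_k\}\subseteq\Gamma$, since any derivation uses only finitely many premises. Now suppose, for contradiction, that $B$ were inconsistent. Then there would be a finite subset $\{\delta_1,\dots,\delta_k\}\subseteq B$ witnessing this inconsistency. Each $\delta_j$ enters $B$ at some finite stage, i.e.\ $\delta_j\in B_{n_j}$, and taking $N=\max_{j}n_j$ the monotonicity of the chain gives $\{\delta_1,\dots,\delta_k\}\subseteq B_N$. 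This would make $B_N$ inconsistent, contradicting Proposition~\ref{prop.consis}. Therefore $B$ is consistent, and the identical argument applied to the chain $\{C_n\}_{n<\omega}$ shows that $C$ is consistent as well.

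I do not expect any genuine obstacle here: the whole argument is the standard ``the union of an increasing chain of consistent sets is consistent'' lemma, and the only points requiring care are verifying that both chains are truly monotone (which is immediate from the case split in the construction) and invoking the finitarity of derivations. In particular, no further appeal to the modal axioms $\DISTKvr$, $\KvrDIS$, or the rule \NECKvr\ is needed at this stage, since all the logical content has already been absorbed into Proposition~\ref{prop.consis}.
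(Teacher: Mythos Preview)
Your proposal is correct and matches the paper's own proof essentially verbatim: both argue by contradiction that an inconsistency of $B$ (resp.\ $C$) would be witnessed by finitely many formulas, which must all lie in some finite stage $B_m$ (resp.\ $C_m$), contradicting Proposition~\ref{prop.consis}. Your version is slightly more explicit about the monotonicity of the chains and the finitary nature of consistency, but there is no substantive difference.
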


\begin{proof}
Suppose $B$ is not consistent. That is, there exist $\phi_1,\dots,\phi_n\in B$ such that $\vdash\phi_1\land\dots\land\phi_n\to\bot$. Therefore, there must be a finite $m$ such that $\phi_1,\dots,\phi_n\in B_m$. But this means that $B_m$ is already inconsistent, contradictory to the construction of $B_k$. The case for $C$ is similar.
\end{proof}

It is routine to prove the following: 
\begin{lemma}(Existence Lemma for $\Dia_i$)
Given a state $s\in S$. 
If $\Dia_i\phi\in s$, then there exists $t\in S$ such that $s\to_i t$ and $\phi\in t$;
\end{lemma}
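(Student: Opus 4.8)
The plan is to follow the standard Lindenbaum-style construction of a witness world, treating $\Dia_i$ as an ordinary normal diamond. First I would set $\Gamma := \{\psi : \Box_i\psi \in s\} \cup \{\phi\}$, which is precisely the set that any canonical $i$-successor $t$ of $s$ is forced to contain, together with the target formula $\phi$. If I can establish that $\Gamma$ is $\SMLKvr$-consistent, then by the Lindenbaum argument already invoked just before this lemma I can extend $\Gamma$ to a maximal consistent set $t \in S$. By construction $\{\psi : \Box_i\psi \in s\} \subseteq \Gamma \subseteq t$, so $s \to_i t$ holds by the definition of the canonical relation $\to_i$, and $\phi \in \Gamma \subseteq t$; hence $t$ is exactly the witness required.

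The only substantive step is the consistency of $\Gamma$, which I would prove by contraposition using nothing beyond the normal-modal core of the system. Suppose $\Gamma$ were inconsistent. Then there are finitely many formulas $\psi_1,\dots,\psi_n$ with each $\Box_i\psi_j \in s$ such that $\vdash \psi_1 \land \cdots \land \psi_n \to \neg\phi$. Applying \GENK\ and then distributing with \DISTK\ (the same pattern already used in the consistency arguments above, e.g.\ in Proposition~\ref{prop.consis}), I obtain $\vdash \Box_i\psi_1 \land \cdots \land \Box_i\psi_n \to \Box_i\neg\phi$. Since every $\Box_i\psi_j$ lies in $s$, and $s$ is a maximal consistent set hence closed under conjunction and modus ponens, it follows that $\Box_i\neg\phi \in s$, i.e.\ $\neg\Dia_i\phi \in s$. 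This contradicts $\Dia_i\phi \in s$ together with the consistency of $s$, so $\Gamma$ is consistent after all.

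I expect no real obstacle here, which is why the text flags the lemma as routine: the $\bBox{i}{c}$ modality and its dedicated axioms $\DISTKvr$ and $\KvrDIS$ play no part, since $\Dia_i$ is a genuinely normal diamond whose existence lemma is entirely insensitive to the ternary relation $R_i^c$. The single point demanding mild care is the passage from $\vdash \psi_1 \land \cdots \land \psi_n \to \neg\phi$ to the boxed implication, which is the standard derived monotonicity of $\Box_i$ (from $\vdash \chi \to \theta$ infer $\vdash \Box_i\chi \to \Box_i\theta$) combined with the \texttt{K}-distribution over conjunctions. By contrast, the genuinely delicate existence argument is the one for $\bDia{i}{c}$ developed through Fact~3.3 and Propositions~\ref{genKvr}--\ref{prop.con}, where the ternary relation and the disjunction axiom really do enter.
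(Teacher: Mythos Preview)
Your proof is correct and is precisely the routine argument the paper has in mind when it omits the details: the standard Lindenbaum extension of $\{\psi:\Box_i\psi\in s\}\cup\{\phi\}$, with consistency established via \GENK\ and \DISTK, using only the normal-modal fragment of $\SMLKvr$. There is nothing to add.
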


Also we have the existence lemma for  $\bDia{i}{c}$: 
\begin{lemma}(Existence Lemma for $\bDia{i}{c}$)
Given a state $s\in S$.
If $\bDia{i}{c}\psi\in s$, then there exist $t,u\in S$ such that $sR^c_i tu$ and $\psi\in t \cap u$.
\end{lemma}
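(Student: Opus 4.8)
The plan is to apply the construction carried out just above to the formula $\psi$ (which plays the role of the diamond witness $\phi$ there), and then simply extend the resulting sets $B$ and $C$ to maximal consistent sets by the Lindenbaum lemma. Thus I take $Z=\{\chi:\Box_i\chi\in s\}\cup\{\psi\}$, $X=\{\chi:\bBox{i}{c}\chi\in s\}$, and let $B=\bigcup_{n<\omega}B_n$, $C=\bigcup_{n<\omega}C_n$ be as defined. By Proposition~\ref{prop.con} both $B$ and $C$ are consistent, so each extends to a maximal $\SMLKvr$-consistent set; write $t\supseteq B$ and $u\supseteq C$. Then $t,u\in S$, and the remaining task is purely to read off that $sR_i^c tu$ and $\psi\in t\cap u$.

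Next I would verify the two defining conditions of $sR_i^c tu$. For condition (1), since $\{\chi:\Box_i\chi\in s\}\subseteq Z$ and $Z\subseteq B_0\subseteq B$ as well as $Z=C_0\subseteq C$, we get $\{\chi:\Box_i\chi\in s\}\subseteq B\cap C\subseteq t\cap u$. For condition (2), the crucial point is that the inductive split was engineered so that every element of $X$ is thrown into at least one of $B,C$: indeed $\chi_0\in B_0$, and each $\chi_{n+1}$ is put into $B_{n+1}$ when consistent with $B_n$ and into $C_{n+1}$ otherwise. Hence $X\subseteq B\cup C\subseteq t\cup u$, which is exactly condition (2). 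This establishes $sR_i^c tu$. Finally, $\psi\in Z\subseteq B\cap C\subseteq t\cap u$ gives $\psi\in t\cap u$, and the two witnesses are found.

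I do not expect any serious obstacle in this lemma itself: the genuine difficulty, namely the consistency of $B$ and $C$, has already been discharged in Propositions~\ref{prop.consis} and~\ref{prop.con}, where the anti-euclidean property and the generalized distribution axiom (via Proposition~\ref{genKvr}) do the real work. What remains here is only to observe that the construction was tailored exactly to the definition of $R_i^c$: the necessities and the witness $\psi$ are forced into both $t$ and $u$ (yielding condition (1) and $\psi\in t\cap u$), while each $\bBox{i}{c}$-formula is forced into at least one of them (yielding condition (2)). Once this correspondence is spelled out, the lemma follows immediately by Lindenbaum extension.
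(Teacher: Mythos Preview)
Your proposal is correct and follows exactly the paper's approach: extend the already-constructed sets $B$ and $C$ to maximal consistent sets via Lindenbaum and read off $sR_i^c tu$ and $\psi\in t\cap u$ from the construction. The paper's own proof says only that ``the construction of $B$ and $C$ itself guarantee that $sR_i^c tu$ and $\phi\in t, u$'' without unpacking conditions (1) and (2), whereas you spell these verifications out explicitly; the substance is identical.
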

\begin{proof}
Let $Z$, $B$ and $C$ be defined as above. Due to Proposition~\ref{prop.con} $B$ and $C$ are both  consistent. Therefore, both can be extended into maximal consistent sets, say $t$ and $u$. Now, the construction of $B$ and $C$ itself guarantee that $sR_i^c tu$ and $\phi\in t, u$. 
\end{proof}

\begin{lemma}(Truth Lemma)
For any state $s\in \M$ and $\phi$, $\M,s\Vdash\phi \iff \phi\in s$.
\end{lemma}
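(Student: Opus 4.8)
The plan is to proceed by induction on the structure of $\phi$, as is standard for truth lemmas in canonical model constructions. The base cases $\phi=\top$ and $\phi=p$ follow immediately from the clause for $\top$ and from the definition $V(s)=\{p:p\in s\}$. The Boolean cases $\phi=\neg\psi$ and $\phi=\psi\land\chi$ are handled using the induction hypothesis together with the standard properties of maximal consistent sets (each \MCS\ contains exactly one of $\psi$ and $\neg\psi$, and contains $\psi\land\chi$ iff it contains both conjuncts). All the genuine work is in the two modal cases, and I would treat the primitive boxes $\Box_i$ and $\bBox{i}{c}$ directly, reducing each to its dual diamond so that the two Existence Lemmas can be applied.

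For $\phi=\Box_i\psi$, the direction $\phi\in s\Rightarrow\M,s\Vdash\phi$ is immediate from the definition of $\to_i$: every $\to_i$-successor $t$ of $s$ satisfies $\{\chi:\Box_i\chi\in s\}\subseteq t$, hence $\psi\in t$, and the induction hypothesis gives $\M,t\Vdash\psi$. For the converse I would argue contrapositively: if $\Box_i\psi\notin s$, then $\Diamond_i\neg\psi\in s$ by the \MCS\ property, so the Existence Lemma for $\Diamond_i$ yields a successor $t$ with $s\to_i t$ and $\neg\psi\in t$; the induction hypothesis then falsifies $\psi$ at $t$, so $\M,s\not\Vdash\Box_i\psi$.

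For $\phi=\bBox{i}{c}\psi$, I would again split into two directions, now leaning on condition (2) in the definition of $R_i^c$ and the Existence Lemma for $\bDia{i}{c}$. If $\bBox{i}{c}\psi\in s$ but $\M,s\not\Vdash\bBox{i}{c}\psi$, there are $t,u$ with $sR_i^c tu$ and $\M,t\Vdash\neg\psi$ and $\M,u\Vdash\neg\psi$; by the induction hypothesis $\neg\psi\in t\cap u$, while condition (2) forces $\psi\in t\cup u$ since $\bBox{i}{c}\psi\in s$, contradicting the consistency of $t$ and $u$. Conversely, if $\bBox{i}{c}\psi\notin s$ then $\bDia{i}{c}\neg\psi\in s$, and the Existence Lemma for $\bDia{i}{c}$ supplies $t,u$ with $sR_i^c tu$ and $\neg\psi\in t\cap u$; the induction hypothesis gives $\M,s\Vdash\bDia{i}{c}\neg\psi$, i.e.\ $\M,s\not\Vdash\bBox{i}{c}\psi$.

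The main obstacle is in fact already discharged before we reach this lemma: the delicate combinatorics live in the two Existence Lemmas (especially the one for $\bDia{i}{c}$, which rested on Propositions~\ref{prop.consis}--\ref{prop.con}). Given those, the only genuinely new step here is the $\bBox{i}{c}$ case, and the point to get right is matching the two clauses defining $R_i^c$ to the two directions of the biconditional: clause (1) together with the $\Diamond_i$ Existence Lemma governs the $\Box_i$ behaviour, while clause (2) — ``at least one of the two disagreeing worlds satisfies each $\bBox{i}{c}$-formula of $s$'' — is exactly what makes the $\bBox{i}{c}$ direction go through.
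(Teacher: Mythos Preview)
Your proof is correct and follows essentially the same route as the paper: induction on formula structure, with the non-trivial modal case handled via condition~(2) in the definition of $R_i^c$ in one direction and the Existence Lemma for $\bDia{i}{c}$ in the other. The only cosmetic difference is that you phrase the argument for the primitive box $\bBox{i}{c}\psi$ while the paper presents the dual case $\bDia{i}{c}\psi$; the content is identical.
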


\begin{proof}
Prove by induction. We only give the $\bDia{i}{c}\psi$ case; the others are routine.

$\Rightarrow$: Suppose $\M,s\Vdash\bDia{i}{c}\psi$. Then there exist $t,u$ such that $sR_i^c tu$, $\M,t\Vdash\psi$ and $\M,u\Vdash\psi$. By induction hypothesis, $\psi\in t\cap u$. If $\bDia{i}{c}\psi\not\in s$, then $\bBox{i}{c}\neg\psi\in s$, which implies $\neg\psi\in t\cup u$ by the construction of $R_i^c$, contradiction. Therefore, $\bDia{i}{c}\psi\in s$.

$\Leftarrow$:Suppose $\bDia{i}{c}\psi\in s$. Then according to the existence lemma for $\bDia{i}{c}$, there exist $t,u$ such that $sR_i^c tu$ and $\psi\in t\cap u$. By induction hypothesis, $\M,t\Vdash\psi$, $\M,u\Vdash\psi$. Therefore $\M,s\Vdash\bDia{i}{c}\psi$.
\end{proof}

The completeness result then follows immediately:
\begin{theorem}(Completeness)
 $\SMLKvr$ is strongly complete over arbitrary models.
\end{theorem}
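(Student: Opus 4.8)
The plan is to obtain strong completeness as an immediate consequence of the Truth Lemma, following the standard canonical-model route. The key step is the contrapositive reformulation: $\SMLKvr$ is strongly complete over $\MLKvr$ models iff every $\SMLKvr$-consistent set of $\LKvr$ formulas is satisfiable in some $\MLKvr$ model.

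First I would spell out this reduction explicitly. Suppose $\Gamma\not\vdash\phi$; then $\Gamma\cup\{\neg\phi\}$ is $\SMLKvr$-consistent, so any $\MLKvr$ model realising the whole set $\Gamma\cup\{\neg\phi\}$ at some point witnesses $\Gamma\not\Vdash\phi$. Hence it suffices to satisfy an arbitrary consistent set $\Gamma$.

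Given such a $\Gamma$, I would extend it to a maximal $\SMLKvr$-consistent set $s$ by the Lindenbaum-style argument already invoked above, so that $s$ is a state of the canonical model $\M$. Since $\M$ is a genuine $\MLKvr$ model --- it satisfies \emph{SYM}, \emph{INCL} and \emph{ATEUC} by the proposition verifying the three conditions on $R_i^c$ --- and since the Truth Lemma yields $\M,s\Vdash\psi\iff\psi\in s$, we obtain $\M,s\Vdash\psi$ for every $\psi\in\Gamma\subseteq s$. Thus $\Gamma$ is satisfiable, which establishes the theorem.

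No genuine obstacle remains at this stage, since all the substantive content has been discharged in the earlier results. The delicate work lay in verifying the \emph{ATEUC} closure of the canonical $R_i^c$ and in proving the Existence Lemma for $\bDia{i}{c}$, which relied on the simultaneous inductive construction of the consistent pair $B,C$ together with the generalised distribution principle of Proposition~\ref{genKvr}. Granting those, the present theorem follows in a few lines.
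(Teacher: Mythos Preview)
Your proposal is correct and matches the paper's approach exactly: the paper simply states that completeness ``follows immediately'' from the Truth Lemma, and your write-up spells out precisely this standard reduction via Lindenbaum extension and the canonical model. There is nothing to add or correct.
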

\begin{remark}
At this point, it is interesting to compare our canonical model with the canonical model used in \cite{WangF14}. A complication in \cite{WangF14} is that merely maximal consistent sets are not enough to build a FO canonical Kripke model. However, as we have seen, we only use the  maximal consistent sets in our canonical \MLKvr model: it does not involve value assignments. Thus we have restored the symmetry between the logical language and the model to some extent: there is no longer too much information in the model, which cannot be talked about by the language. Note that we allow $sR_itt$, which also helps to have compact models. 
\end{remark}

\section{Extended language with  binary modalities}
\label{Sec.gen}
In the previous sections, we treat $\bDia{i}{c}$ as a unary modality interpreted by a ternary relation.  Essentially, $\bDia{i}{c}$ can be viewed as a binary modality where the two arguments are the same. In this section, we restore the symmetry between the semantics and the syntax one step further by having the \textit{binary} $\bDia{i}{c}(\cdot, \cdot)$ in the language. Surprisingly, this  extension does \textit{not} increase the expressive power of $\MLKvr$. What is more, the new logic is normal. Consequently, the extension will help us to understand $\MLKvr$ more deeply from a normal modal logic point of view. 

The extended language $\MLKvb$ is given by the following BNF ($^b$ for \textit{binary}):

$$\phi::=\top\mid p\mid \neg\phi\mid (\phi\wedge\phi)\mid \Box_i\phi\mid \bBox{i}{c}(\phi,\phi)$$
We define $\bDia{i}{c}(\psi, \phi)$ as $\neg \bBox{i}{c}(\neg \psi, \neg\phi)$. And $\bbDia{i}{c}\phi$ is now equivalent to the  $\MLKvb$ formula $\bbDia{i}{c}(\phi, \phi)$. To see the intuition, for example, $\bDia{i}{c}(p, \neg p)$ says that $i$ can see a $p$ world and a $\neg p$ world which  do not agree on the value of $c$. Formally, the semantics is defined on the same \MLKvr\ models $\M=\lr{S,\{\to_i:i\in\I\},\{R_i^c:i\in\I,c\in\C\},V, V_\C}$:

{\begin{small} 
$$\begin{array}{|l@{\quad\Lra\quad}l|}
\hline
\M,s\Vdash\bbDia{i}{c}(\phi,\psi) & \text{ there exist $t,u\in S$ such that $sR_i^c tu$, $\M,t\Vdash\phi$ and $\M,u\Vdash\psi$.}\\
\hline
\end{array}
$$
\end{small}}




The above semantics coincides with the standard semantics for binary diamond modalities \cite{mlbook}.\footnote{Binary modalities appear in many modal logics, such as the until operator in temporal logic, and the relevant implication in relevance logic interpreted on Kripke models with a ternary relation.} 
Note that  $\bDia{i}{c}(\phi,\psi)$ is essentially different from $\bDia{i}{c}(\phi\vee\psi)$: the latter only says that there are two $\phi\lor\psi$-successors that have different values of $c$, but not necessarily one $\phi$ world and one $\psi$ world. So, on first sight, \LKvr seems to be weaker than $\MLKvb$.

However, we will show by the following lemma that $\MLKvr$ and $\MLKvb$ are equally expressive, by reducing the binary $\bbDia{i}{c}$ to the unary $\bbDia{i}{c}$ in presence of the diamond $\Dia_i$.  

\begin{lemma}\label{lem.red}
$\bDia{i}{c}(\phi,\psi)$ is equivalent to the disjunction of the following three formulas: 
\begin{enumerate}
\item $\bDia{i}{c}\phi\land\Dia_i\psi$
\item $\bDia{i}{c}\psi\land\Dia_i\phi$
\item $\Dia_i\phi\land\Dia_i\psi\land\neg\bDia{i}{c}\phi\land\neg\bDia{i}{c}\psi\land\bDia{i}{c}(\phi\vee\psi)$
\end{enumerate}
\end{lemma}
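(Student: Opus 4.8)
The plan is to establish the stated equivalence semantically, at an arbitrary pointed \MLKvb\ model $\M,s$, treating the two directions separately. The easy direction is that $\bDia{i}{c}(\phi,\psi)$ entails the disjunction. Here I would fix witnesses $t,u$ with $sR_i^c tu$, $\M,t\Vdash\phi$ and $\M,u\Vdash\psi$; by \textit{INCL} these give $s\to_i t$ and $s\to_i u$, so both $\Dia_i\phi$ and $\Dia_i\psi$ hold at $s$. Then I would simply case-split: if $\bDia{i}{c}\phi$ holds we land in disjunct~(1), if $\bDia{i}{c}\psi$ holds we land in disjunct~(2), and if neither holds then the very same pair $t,u$ (both of which satisfy $\phi\vee\psi$) witnesses $\bDia{i}{c}(\phi\vee\psi)$, so together with the two $\Dia_i$ conjuncts and the two negations we obtain disjunct~(3).

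The substantive direction is to show that each disjunct implies $\bDia{i}{c}(\phi,\psi)$, and this is where the frame conditions do the work. For disjunct~(1) I would take a $c$-disagreeing pair $t,t'$ of $\phi$-worlds (from $\bDia{i}{c}\phi$) together with a $\psi$-successor $v$ (from $\Dia_i\psi$), and apply \textit{ATEUC} to $sR_i^c tt'$ and $s\to_i v$: this yields $sR_i^c tv$ or $sR_i^c t'v$, either of which pairs a $\phi$-world with the $\psi$-world $v$ and hence directly witnesses $\bDia{i}{c}(\phi,\psi)$. Disjunct~(2) is then handled identically after observing, via \textit{SYM}, that $\bDia{i}{c}(\phi,\psi)$ and $\bDia{i}{c}(\psi,\phi)$ are semantically interchangeable.

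The interesting case is disjunct~(3). From $\bDia{i}{c}(\phi\vee\psi)$ I would take witnesses $t,u$ with $sR_i^c tu$ where both $t$ and $u$ satisfy $\phi\vee\psi$. The key observation is that $\neg\bDia{i}{c}\phi$ forbids $t$ and $u$ from both satisfying $\phi$ (otherwise $sR_i^c tu$ would witness $\bDia{i}{c}\phi$), and symmetrically $\neg\bDia{i}{c}\psi$ forbids them from both satisfying $\psi$. Combined with the fact that each of $t,u$ satisfies $\phi\vee\psi$, a short propositional argument forces one of the two worlds to satisfy $\phi$ and the other to satisfy $\psi$; using \textit{SYM} to orient the ordered pair correctly then gives $\bDia{i}{c}(\phi,\psi)$.

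I expect the only real subtlety to lie in the correct application of \textit{ATEUC} in disjunct~(1): one must not assume that the third successor $v$ $c$-disagrees with a prescribed one of $t,t'$, since it may well agree with one of them; \textit{ATEUC} is precisely what guarantees disagreement with at least one, which is all the argument needs. As a side remark, the $\Dia_i\phi$ and $\Dia_i\psi$ conjuncts appearing in disjunct~(3) are not used in establishing the implication from~(3) to $\bDia{i}{c}(\phi,\psi)$; they are included only so that the three disjuncts are jointly exhaustive, which is what the easy direction requires.
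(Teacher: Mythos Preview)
Your proposal is correct and follows essentially the same approach as the paper's own proof: the paper also shows each disjunct entails $\bDia{i}{c}(\phi,\psi)$ (using \textit{ATEUC} for (1) and (2) and a case analysis on the $\phi\vee\psi$ witnesses for (3)), and handles the converse by the same case-split you describe. Your treatment of disjunct~(3) is in fact slightly cleaner than the paper's, since you argue directly on the ternary-relation semantics rather than via the informal ``same value of $c$'' reading, and your side remark that $\Dia_i\phi$ and $\Dia_i\psi$ are unused in that direction is a correct observation not made explicit in the paper.
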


\begin{proof}
The proof consists of two directions. 

First, we show that each of the three disjuncts entails $\bbDia{i}{c}(\phi,\psi)$.
\begin{enumerate}
\item For any model $\M,s$ that satisfies $\bDia{i}{c}\phi\land\Dia_i\psi$, there exists $t,u\in S$ such that $sR_i^ctu$, $t\Vdash\phi$ and $u\Vdash\phi$, and exists $v$ such that $s\to_i v$ and $v\Vdash\psi$. According to the property of $R_i^c$, at least one of $sR_i^c tv$ and $sR_i^c uv$ holds. W.l.o.g. suppose $sR_i^c tv$. Then according to the semantics of $\bDia{i}{c}$, we have $s\Vdash\bDia{i}{c}(\phi,\psi)$.

\item For $\bDia{i}{c}\psi\land\Dia_i\phi$, the proof is similar to (\romannumeral1).

\item If $\M,s\Vdash \Dia_i\phi\land\Dia_i\psi\land\neg\bDia{i}{c}\phi\land\neg\bDia{i}{c}\psi\land\bDia{i}{c}(\phi\vee\psi)$, then: $s$ has $\phi$-successors and $\psi$-successors; all $\phi$-successors have the same value of $c$, all $\phi$-successors have the same value of $c$, but the two values are different due to $\M,s\Vdash \bDia{i}{c}(\phi\vee\psi)$. So we can easily guarantee that there are two states, one $\phi$-successor and one $\psi$-successor of $s$ such that they have different values with regard to $c$. This means $\M,s\Vdash\bbDia{i}{c}(\phi,\psi)$.
\end{enumerate}

Second, we prove that if $\M,s\Vdash\bDia{i}{c}(\phi,\psi)$, then at least one of (\romannumeral1), (\romannumeral2) and (\romannumeral3) holds.

Suppose $\M,s\Vdash\bDia{i}{c}(\phi,\psi)$, namely there exist $t,u\in\M$ such that $sR_i^c tu$, $\M,t\Vdash\phi$ and $\M,u\Vdash\psi$. We immediately have $\M,s\Vdash\Dia_i\phi\land\Dia_i\psi\land\bDia{i}{c}(\phi\lor\psi)$. If neither $\bDia{i}{c}\phi$ nor $\bDia{i}{c}\psi$ holds on $s$, then $\M,s\Vdash\Dia_i\phi\land\Dia_i\psi\land\neg\bDia{i}{c}\phi\land\neg\bDia{i}{c}\psi\land\bDia{i}{c}(\phi\vee\psi)$. Therefore, $\M,s\Vdash(\bDia{i}{c}\phi\land\Dia_i\psi)\lor (\bDia{i}{c}\psi\land\Dia_i\phi)\lor (\Dia_i\phi\land\Dia_i\psi\land\neg\bDia{i}{c}\phi\land\neg\bDia{i}{c}\psi\land\bDia{i}{c}(\phi\vee\psi))$

In sum, we can now conclude the equivalence.
\end{proof}

With this lemma in hand, the reduction theorem is straightforward:
\begin{theorem}(Reduction)
For any \MLKvb\ formula $\phi$, there exists an  \MLKvr\ formula $\psi$ such that for any pointed model $\M,s$: $\M,s\Vdash\phi \iff \M,s\Vdash\psi$.
\end{theorem}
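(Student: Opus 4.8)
The plan is to define a meaning-preserving translation $t$ from $\MLKvb$ to $\MLKvr$ by recursion on the structure of the input formula, and to verify $\M,s\Vdash\phi\iff\M,s\Vdash t(\phi)$ by a straightforward induction, invoking Lemma~\ref{lem.red} at the only nontrivial step. Since both languages are interpreted over the very same class of $\MLKvr$ models, no model transformation is required; all the work goes into eliminating the binary modality in favour of the unary $\bDia{i}{c}$ together with $\Dia_i$.

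Concretely, I would let $t$ commute with all the propositional connectives and with $\Box_i$, i.e.\ $t(p)=p$, $t(\neg\phi)=\neg t(\phi)$, $t(\phi\wedge\psi)=t(\phi)\wedge t(\psi)$, and $t(\Box_i\phi)=\Box_i t(\phi)$. The only genuinely new clause is the one for the primitive binary box $\bBox{i}{c}(\phi,\psi)$. Here I would first recurse on the two arguments to obtain $\MLKvr$ formulas $\alpha:=t(\phi)$ and $\beta:=t(\psi)$, and then unfold the binary box through its dual: since $\bBox{i}{c}(\phi,\psi)$ is by definition $\neg\bDia{i}{c}(\neg\phi,\neg\psi)$, I set $t(\bBox{i}{c}(\phi,\psi))=\neg D(\neg\alpha,\neg\beta)$, where $D(\cdot,\cdot)$ denotes the explicit disjunction of the three formulas provided by Lemma~\ref{lem.red}, with its two schematic arguments instantiated to $\neg\alpha$ and $\neg\beta$. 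Because $D$ only involves $\Dia_i$, the unary $\bDia{i}{c}$, and Boolean connectives applied to its arguments, and because $\alpha,\beta$ are already $\MLKvr$ formulas by the induction hypothesis, the output is again a genuine $\MLKvr$ formula; moreover the recursion is well-founded, since it descends to the strictly smaller subformulas $\phi$ and $\psi$.

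For correctness I would induct on the complexity of the $\MLKvb$ formula. The atomic, Boolean, and $\Box_i$ cases are immediate from the induction hypothesis and the shared semantic clauses. For the binary box, the induction hypothesis yields $\M,u\Vdash\phi\iff\M,u\Vdash\alpha$ and $\M,u\Vdash\psi\iff\M,u\Vdash\beta$ at every state $u$, so $\bDia{i}{c}(\neg\phi,\neg\psi)$ and $\bDia{i}{c}(\neg\alpha,\neg\beta)$ have the same truth set on $\M$; Lemma~\ref{lem.red}, read as a semantic equivalence valid on every pointed $\MLKvr$ model and applied with arguments $\neg\alpha,\neg\beta$, then identifies the latter with $D(\neg\alpha,\neg\beta)$, and negating both sides gives $\M,s\Vdash\bBox{i}{c}(\phi,\psi)\iff\M,s\Vdash t(\bBox{i}{c}(\phi,\psi))$.

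The one place where care is needed, and which I would emphasize, is that Lemma~\ref{lem.red} is used \emph{as a schema}: it must hold uniformly for all pairs of arguments, not merely for the specific subformulas occurring in $\phi$, so that substituting the already-translated $\neg\alpha,\neg\beta$ into the disjunction is legitimate. This is exactly what the lemma supplies, since its proof is purely semantic and makes no assumption on its arguments. I therefore do not expect any real obstacle: the substantive content lives in Lemma~\ref{lem.red}, and the reduction theorem is the routine compositional wrap-up, with the dualization of the primitive box being the only piece of bookkeeping.
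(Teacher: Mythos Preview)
Your proposal is correct and follows essentially the same approach as the paper: define a structural translation that commutes with the Boolean and unary-modal connectives and, at the binary modality, recurse on the two arguments and replace the binary operator by the three-term disjunction supplied by Lemma~\ref{lem.red}; correctness is then a routine induction. The only cosmetic difference is that the paper writes its recursive clause directly for the derived diamond $\bDia{i}{c}(\phi,\psi)$, whereas you work with the primitive box $\bBox{i}{c}(\phi,\psi)$ and handle the dualization explicitly; this is pure bookkeeping and changes nothing of substance.
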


\begin{proof}
We define a reduction function $r$ inductively:
\begin{itemize}
\item $r(p)=p$; $r(\neg\phi)=\neg r(\phi)$; $r(\phi\land\psi)=r(\phi)\land r(\psi)$; $r(\Dia_i\phi)=\Dia_i r(\phi)$;
\item $r(\bDia{i}{c}(\phi,\psi))=(\bDia{i}{c}r(\phi)\land\Dia_i r(\psi)) \vee (\bDia{i}{c}r(\psi)\land\Dia_i r(\phi)) \vee (\Dia_i r(\phi)\land\Dia_ir(\psi)\land\neg\bDia{i}{c}r(\phi)\land\neg\bDia{i}{c}r(\psi)\land\bDia{i}{c}(r(\phi)\vee r(\psi)))$.
\end{itemize}

The correctness of the reduction is guaranteed based on Lemma~\ref{lem.red}. It is not hard (but important) to see that the rewriting always terminates.
\end{proof}

\begin{remark}
Although $\MLKvr$ is equally expressive as \MLKvb\ in presence of $\Dia_i$, it is not the case if $\Dia_i$ is absent. To see this, consider the following two pointed models $\M,s$ and $\N,x$ where $sR_i^ctu, sR_i^cuv$ in the left model, and $xR_i^cyz$ in the other model:
$$\xymatrix@R-11pt{
& t:p\ar@{-}[d]|c && & y:p\ar@{-}[dd]|c\\
s\ar[ur]|i \ar[r]|i \ar[dr]|i & u:p && x\ar[ur]|i \ar[dr]|i\\
& v:q\ar@{-}[u]|c && & z:p
}$$

We can use $\bDia{i}{c}(p,q)$ to distinguish the two pointed models; however, they are indistinguishable by using any formula with the unary $\bDia{i}{c}$ but no $\Dia_i$, which can be proved by a simple induction. 
\end{remark}

Now observe that $\MLKvb$ is a standard modal language defined on standard Kripke models with standard semantics. It is a relatively routine exercise to propose a normal modal logic system with the following axioms $\EQUIV$, $\EXIST$ and $\DISBK$ to capture the corresponding special properties of the models:\\ 
\begin{minipage}{0.64\textwidth}
\begin{center}
\begin{tabular}{lc}
\multicolumn{2}{c}{System $\SMLKvb$}\\
\multicolumn{2}{l}{Axiom Schemas}\\
\texttt{TAUT} & \tr{ all the instances of tautologies}\\
\DISTK &$ \Box_i (p\to q)\to (\Box_i p\to \Box_i q)$\\
\DISTKvb &$ \bBox{i}{c}(p\to q,r)\to (\bBox{i}{c}(p,r)\to \bBox{i}{c}(q,r))$\\
$\EQUIV$ &$\bbBox{i}{c}(p,q)\to\bbBox{i}{c}(q,p)$\\
$\EXIST$ &$\bDia{i}{c}(p,q)\to\Dia_i p$\\
$\DISBK$ &$\bDia{i}{c}(p,q)\land\Diamond_i r \to  \bDia{i}{c}(p,r)\lor \bDia{i}{c}(q,r)$\\
\end{tabular}
\end{center}
\end{minipage}
\hfill
\begin{minipage}{0.33\textwidth}
\begin{center}
\begin{tabular}{lc}
\multicolumn{2}{l}{Rules}\\
\texttt{MP}& $\dfrac{\phi,\phi\to\psi}{\psi}$\\
\GENK &$\dfrac{\phi}{\Box_i \phi}$\\
\NECKvb &$\dfrac{\phi}{\bBox{i}{c}(\phi,\psi)}$\\
\SUB &$\dfrac{\phi}{\phi [p\slash\psi]}$\\
\RE &$\dfrac{\psi\lra\chi}{\phi\lra \phi[\psi\slash\chi]}$\\
\end{tabular}
\end{center}
\end{minipage}\\

Note that due to $\EQUIV$, we do not need to include the variations of \DISTKvb\ and \NECKvb\ w.r.t.\ the second argument in the binary $\bBox{i}{c}$ (cf. \cite{mlbook} for the standard proof systems of polyadic normal  modal logics.)

In this system $\SMLKvb$ we can derive all the axioms in $\SMLKvr$. Before proving it, we first show the following handy propositions. 
\begin{proposition}\label{prop.dis}
$\vdash_{\SMLKvb} \bDia{i}{c}(p\lor q,r)\to\bDia{i}{c}(p,r)\lor\bDia{i}{c}(q,r)$.
\end{proposition}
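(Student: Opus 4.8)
The plan is to exploit the fact that, for a fixed second argument $r$, the operator $\bBox{i}{c}(\cdot,r)$ behaves exactly like an ordinary normal box: \DISTKvb\ plays the role of its \texttt{K}-axiom and \NECKvb\ the role of its necessitation rule. Hence the proposition is nothing but the dual of the familiar normal-modal-logic theorem that a box distributes over conjunction. Concretely, I would first establish the ``box over conjunction'' form
$$\vdash_{\SMLKvb}\bBox{i}{c}(p,r)\land\bBox{i}{c}(q,r)\to\bBox{i}{c}(p\land q,r),$$
and then dualise it into the desired statement.

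For the box-over-conjunction step I would argue exactly as in the standard derivation for normal modalities. Starting from the propositional tautology $p\to(q\to(p\land q))$, an application of \NECKvb\ yields $\bBox{i}{c}(p\to(q\to p\land q),r)$. Applying \DISTKvb\ together with \MP\ once gives $\bBox{i}{c}(p,r)\to\bBox{i}{c}(q\to p\land q,r)$, and applying \DISTKvb\ a second time to the consequent gives $\bBox{i}{c}(q\to p\land q,r)\to(\bBox{i}{c}(q,r)\to\bBox{i}{c}(p\land q,r))$. Chaining these through propositional logic (\TAUT, \MP) produces the displayed implication.

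Finally, to dualise, I would instantiate the schema by \SUB, replacing $p,q,r$ with $\neg p,\neg q,\neg r$, obtaining $\bBox{i}{c}(\neg p,\neg r)\land\bBox{i}{c}(\neg q,\neg r)\to\bBox{i}{c}(\neg p\land\neg q,\neg r)$. Taking the contrapositive by propositional reasoning and unfolding the definition $\bDia{i}{c}(\phi,\psi):=\neg\bBox{i}{c}(\neg\phi,\neg\psi)$ turns the two conjuncts into $\bDia{i}{c}(p,r)$ and $\bDia{i}{c}(q,r)$, while the negated antecedent becomes $\neg\bBox{i}{c}(\neg p\land\neg q,\neg r)$; one further use of \RE\ to rewrite $\neg p\land\neg q$ as $\neg(p\lor q)$ inside the box identifies this with $\bDia{i}{c}(p\lor q,r)$, yielding exactly $\vdash_{\SMLKvb}\bDia{i}{c}(p\lor q,r)\to\bDia{i}{c}(p,r)\lor\bDia{i}{c}(q,r)$.

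There is essentially no genuine obstacle here: the whole content is the recognition that $\bBox{i}{c}(\cdot,r)$ is normal in its first argument, so that distribution over disjunction comes for free, independently of the model-specific axioms \EQUIV, \EXIST\ and \DISBK. The only point requiring care is the bookkeeping caused by the definition of $\bDia{i}{c}$ negating \emph{both} arguments—so the fixed second argument must be carried through the derivation as $\neg r$ and matched up at the end—but this is purely routine propositional manipulation assisted by \RE.
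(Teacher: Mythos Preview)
Your proof is correct and rests on the same idea as the paper's: $\bBox{i}{c}(\cdot,r)$ is a normal box in its first argument thanks to \DISTKvb\ and \NECKvb, so distribution over disjunction is automatic. The paper's execution differs only cosmetically: instead of first isolating the box-over-conjunction lemma and then dualising, it takes the contrapositive of \DISTKvb\ directly to obtain $\bBox{i}{c}(p,r)\land\bDia{i}{c}(\neg q,\neg r)\to\bDia{i}{c}(p\land\neg q,\neg r)$, rearranges, and substitutes. Your organisation is arguably cleaner since it names the standard normal-modal-logic step explicitly, but the underlying argument and the axioms invoked are identical.
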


\begin{proof}
This proposition captures the interaction between boolean operator $\lor$ and $\bDia{i}{c}$ \footnote{Actually this is a standard axiom for normal modal logic. In case the binary case might not be that familiar, we give the proof here.}. So we can only start from the axiom \DISTKvb. Note that $\RE$ is used frequently. 

(1) $\bBox{i}{c}(p\to q,r)\to (\bBox{i}{c}(p,r)\to \bBox{i}{c}(q,r))$ (\DISTKvb)

(2) $\neg(\bBox{i}{c}(p,r)\to \bBox{i}{c}(q,r))\to\neg\bBox{i}{c}(p\to q,r)$ 

(3) $\bBox{i}{c}(p,r)\land\bDia{i}{c}(\neg q,\neg r)\to\bDia{i}{c}(p\land \neg q,\neg r)$

(4) $\bDia{i}{c}(\neg q,\neg r)\to(\neg\bBox{i}{c}(p,r)\lor\bDia{i}{c}(p\land \neg q,\neg r))$

(5) $\bDia{i}{c}(\neg q,\neg r)\to(\bDia{i}{c}(\neg p,\neg r)\lor\bDia{i}{c}(p\land \neg q,\neg r))$

(6) $\bDia{i}{c}(p\lor q,r)\to(\bDia{i}{c}(p,r)\lor\bDia{i}{c}(\neg p\land(p\lor q),r))$ ((5) \& \SUB)

(7) $\bDia{i}{c}(p\lor q,r)\to(\bDia{i}{c}(p,r)\lor\bDia{i}{c}(q,r))$
\end{proof}

\begin{proposition}\label{prop.con1}
$\vdash_{\SMLKvb} \bDia{i}{c}(p\land q,r)\to\bDia{i}{c}(p,r)\land\bDia{i}{c}(q,r)$.
\end{proposition}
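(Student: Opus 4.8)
The plan is to reduce everything to a single monotonicity principle for the first argument of the binary diamond, exactly as one derives $\Box$-monotonicity in a normal modal logic. Semantically the claim is transparent: a witness pair $t,u$ with $sR_i^c tu$, $\M,t\Vdash p\land q$ and $\M,u\Vdash r$ simultaneously witnesses $\bDia{i}{c}(p,r)$ and $\bDia{i}{c}(q,r)$, so the only real work is to reproduce this observation syntactically inside $\SMLKvb$.

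First I would establish the following admissible monotonicity fact: whenever $\vdash_{\SMLKvb}\alpha\to\beta$, we also have $\vdash_{\SMLKvb}\bDia{i}{c}(\alpha,r)\to\bDia{i}{c}(\beta,r)$. To prove it, take the contrapositive $\vdash\neg\beta\to\neg\alpha$ (by \texttt{TAUT} and \MP), apply \NECKvb\ to obtain $\vdash\bBox{i}{c}(\neg\beta\to\neg\alpha,\neg r)$, and then invoke \DISTKvb\ (via \SUB) to get $\vdash\bBox{i}{c}(\neg\beta,\neg r)\to\bBox{i}{c}(\neg\alpha,\neg r)$. Contraposing once more and unfolding the abbreviation $\bDia{i}{c}(\psi,\phi):=\neg\bBox{i}{c}(\neg\psi,\neg\phi)$ (using \RE) yields precisely $\bDia{i}{c}(\alpha,r)\to\bDia{i}{c}(\beta,r)$.

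With this principle in hand, the proposition is immediate. Instantiating it with the tautologies $\vdash p\land q\to p$ and $\vdash p\land q\to q$ gives $\vdash_{\SMLKvb}\bDia{i}{c}(p\land q,r)\to\bDia{i}{c}(p,r)$ and $\vdash_{\SMLKvb}\bDia{i}{c}(p\land q,r)\to\bDia{i}{c}(q,r)$ respectively. Conjoining the two consequents by propositional reasoning (\texttt{TAUT} together with \MP) produces $\vdash_{\SMLKvb}\bDia{i}{c}(p\land q,r)\to\bDia{i}{c}(p,r)\land\bDia{i}{c}(q,r)$, as required.

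I do not expect any genuine obstacle here: the argument is the standard normal-modal-logic derivation of diamond monotonicity, and since both rewrites act only on the first argument of $\bBox{i}{c}$, the symmetry axiom $\EQUIV$ is not even needed. The only point demanding a little care is the bookkeeping of the double negations when passing between $\bBox{i}{c}$ and its dual $\bDia{i}{c}$ through \RE, which is entirely routine.
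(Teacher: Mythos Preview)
Your proposal is correct and follows essentially the same route as the paper: both rely on \DISTKvb\ (together with \NECKvb\ and \SUB) to obtain monotonicity of the binary diamond in its first argument, from which the conjunction case is immediate via the tautologies $p\land q\to p$ and $p\land q\to q$. The paper only says ``similar to the above proof'', but your explicit isolation of the monotonicity principle is exactly what that similarity amounts to in the conjunction case.
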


\begin{proof}
This is similar to the above proof.
\end{proof}

\begin{proposition}\label{prop.46}
$\vdash_{\SMLKvb}\bBox{i}{c}(p,r)\land\bBox{i}{c}(q,r)\land\Dia_i \neg r \to \bBox{i}{c}(p,q)$.
\end{proposition}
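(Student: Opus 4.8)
The plan is to recognise the target as, up to propositional rearrangement, a single substitution instance of the axiom \DISBK. Semantically this is transparent: \DISBK is precisely the syntactic counterpart of the anti-euclidean frame property, and the statement to be proved is exactly that property read contrapositively, once the two $\bBox{i}{c}$-premises are understood as ``there is no $c$-disagreeing pair among the $p$/$r$-successors (resp.\ $q$/$r$-successors)'' and the conclusion as ``there is no $c$-disagreeing pair among the $p$/$q$-successors''. So no principle beyond \DISBK should be needed.

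Concretely, first I would instantiate \DISBK via \SUB, replacing $p$ by $\neg p$, $q$ by $\neg q$, and $r$ by $\neg r$ (three uniform substitutions on distinct variables, applied in sequence), obtaining
\[ \bDia{i}{c}(\neg p,\neg q)\land\Diamond_i\neg r\to\bDia{i}{c}(\neg p,\neg r)\lor\bDia{i}{c}(\neg q,\neg r). \]
Next I would unfold each $\bDia{i}{c}(\cdot,\cdot)$ according to its definition $\bDia{i}{c}(\alpha,\beta):=\neg\bBox{i}{c}(\neg\alpha,\neg\beta)$, turning e.g.\ $\bDia{i}{c}(\neg p,\neg q)$ into $\neg\bBox{i}{c}(\neg\neg p,\neg\neg q)$, and then use \RE together with the tautology $\neg\neg\phi\lra\phi$ to cancel the double negations in all three $\bBox{i}{c}$-subformulas. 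This yields
\[ \neg\bBox{i}{c}(p,q)\land\Diamond_i\neg r\to\neg\bBox{i}{c}(p,r)\lor\neg\bBox{i}{c}(q,r). \]
Finally, a purely propositional step (\TAUT and \MP) contraposes this into the desired $\bBox{i}{c}(p,r)\land\bBox{i}{c}(q,r)\land\Dia_i\neg r\to\bBox{i}{c}(p,q)$, using that $\Dia_i$ abbreviates $\Diamond_i$.

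I do not expect a genuine obstacle here: unlike Propositions~\ref{prop.dis} and~\ref{prop.con1}, which required chaining \DISTKvb\ and \RE through several intermediate steps, this statement is essentially one instance of \DISBK read contrapositively. The only point needing care is the bookkeeping of negations: because $\bDia{i}{c}$ is a defined abbreviation in terms of $\bBox{i}{c}$ with negated arguments, the substitution introduces double negations that must be cleared by \RE before the propositional rearrangement matches the stated form. That is routine but easy to mishandle, so I would carry out the double-negation cancellations explicitly rather than sweep them into ``by propositional reasoning''.
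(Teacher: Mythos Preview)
Your proposal is correct and matches the paper's own proof, which simply says ``Easily derived from $\DISBK$'' without spelling out the substitution and contraposition. You have filled in exactly those routine steps (the $\SUB$ instance with negated variables, the $\RE$ cancellation of double negations, and the propositional contraposition), so there is nothing to add.
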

\begin{proof}
Easily derived from $\DISBK$: $\bDia{i}{c}(p,q)\land\Diamond_i r \to  \bDia{i}{c}(p,r)\lor \bDia{i}{c}(q,r)$.
\end{proof}

\begin{proposition}
All the $\SMLKvr$ axioms are provable in $\SMLKvb$ and the rules of $\SMLKvr$ are admissible in $\SMLKvb$ (viewing  $\bDia{i}{c}\phi$ as $\bDia{i}{c}(\phi,\phi)$).
\end{proposition}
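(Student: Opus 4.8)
The plan is to show that under the identification $\bDia{i}{c}\phi := \bDia{i}{c}(\phi,\phi)$ (equivalently $\bBox{i}{c}\phi := \bBox{i}{c}(\phi,\phi)$), every axiom of $\SMLKvr$ is a $\SMLKvb$-theorem and every rule of $\SMLKvr$ is admissible in $\SMLKvb$. The shared axioms $\TAUT$ and $\DISTK$, together with the shared rules $\MP$, $\GENK$, $\SUB$, and $\RE$, require nothing, since they are literally present in $\SMLKvb$ and do not mention $\bBox{i}{c}$. So the real work is to derive the three $\bBox{i}{c}$-specific items of $\SMLKvr$: the axioms $\DISTKvr$ and $\KvrDIS$, and the rule $\NECKvr$.

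First I would handle $\NECKvr$, i.e.\ $\dfrac{\phi}{\bBox{i}{c}\phi}$: given $\vdash\phi$, apply the $\SMLKvb$ rule $\NECKvb$ with both arguments equal to $\phi$ to obtain $\vdash\bBox{i}{c}(\phi,\phi)$, which is exactly $\bBox{i}{c}\phi$ under our identification. Next, for $\DISTKvr$, namely $\Box_i(p\to q)\to(\bBox{i}{c}p\to\bBox{i}{c}q)$, I would note that its diamond form is $\Box_i(p\to q)\to(\bDia{i}{c}p\to\bDia{i}{c}q)$. Unfolding $\bDia{i}{c}$ as the diagonal binary diamond and using $\DISTKvb$ in its diamond-dual shape $\bBox{i}{c}(p\to q,r)\to(\bDia{i}{c}(p\land\neg q,r)\to\dots)$ — exactly the kind of manipulation already carried out in Proposition~\ref{prop.dis} — should yield the claim after an application of $\RE$ and $\SUB$; the reasoning mirrors the standard derivation of the monotonicity of a normal box.

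The main obstacle is $\KvrDIS$, which in its $\SMLKvr$ form reads $\Dia_i(p\land q)\land\bDia{i}{c}(p\lor q)\to(\bDia{i}{c}p\lor\bDia{i}{c}q)$. I expect to prove this from $\DISBK$ together with the distribution Propositions~\ref{prop.dis} and~\ref{prop.con1}. The idea is to instantiate $\DISBK$, $\bDia{i}{c}(a,b)\land\Dia_i r\to\bDia{i}{c}(a,r)\lor\bDia{i}{c}(b,r)$, in a way that lets the binary diamond on $(p\lor q)$ interact with a $\Dia_i(p\land q)$-successor so as to split into two binary diamonds whose diagonals can be recognized as $\bDia{i}{c}p$ and $\bDia{i}{c}q$; Proposition~\ref{prop.dis} is what lets me pass from $\bDia{i}{c}(p\lor q)=\bDia{i}{c}(p\lor q,p\lor q)$ down to a disjunction of $\bDia{i}{c}$-formulas with the correct diagonals. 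The bookkeeping here is delicate because $\bDia{i}{c}$ on a disjunction does not distribute in the naive way (as the paper itself stresses), so the $\Dia_i(p\land q)$ conjunct is essential to license the use of $\DISBK$; getting exactly the right substitutions into $\DISBK$ and Proposition~\ref{prop.dis} is where the derivation will take genuine care rather than routine rewriting.

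Once all three nontrivial items are derived, I would conclude that $\SMLKvb$ proves every $\SMLKvr$ axiom and admits every $\SMLKvr$ rule, so that $\SMLKvb$ is at least as strong as $\SMLKvr$ modulo the diagonal identification, completing the proof.
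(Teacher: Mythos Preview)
Your treatment of $\NECKvr$ and of $\KvrDIS$ matches the paper's approach: $\NECKvr$ is the diagonal instance of $\NECKvb$, and for $\KvrDIS$ the paper, like you, instantiates $\DISBK$ with $r:=p\land q$ and $a=b:=p\lor q$, then uses Proposition~\ref{prop.dis} to split the first argument and Proposition~\ref{prop.con1} to discard the conjunct in the second argument, arriving at $\bDia{i}{c}(p,p)\lor\bDia{i}{c}(q,q)$.

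There is, however, a genuine gap in your derivation of $\DISTKvr$. You propose to obtain $\Box_i(p\to q)\to(\bBox{i}{c}(p,p)\to\bBox{i}{c}(q,q))$ from $\DISTKvb$ alone (plus $\RE$, $\SUB$, and the manipulations of Proposition~\ref{prop.dis}). But $\DISTKvb$ and Proposition~\ref{prop.dis} never mention $\Box_i$; they only let you push implications through the $\bBox{i}{c}$-slots \emph{once you already have} a premise of the form $\bBox{i}{c}(p\to q,r)$. Nothing in your sketch explains how the hypothesis $\Box_i(p\to q)$ produces such a premise. Semantically, without the inclusion condition linking $R_i^c$ to $\to_i$, the two modalities are independent and $\DISTKvr$ fails; syntactically, the missing bridge is precisely the axiom $\EXIST$ ($\bDia{i}{c}(p,q)\to\Dia_i p$), whose contrapositive $\Box_i\neg p\to\bBox{i}{c}(\neg p,\neg q)$ converts $\Box_i(p\to q)$ into $\bBox{i}{c}(p\to q,p)$ and $\bBox{i}{c}(p\to q,q)$. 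The paper's derivation uses $\EXIST$ in exactly this way, then applies $\DISTKvb$ twice (once per slot, with $\EQUIV$ to swap arguments) to pass from $\bBox{i}{c}(p,p)$ to $\bBox{i}{c}(q,q)$. Your ``standard monotonicity'' intuition is for a single modality; here you must cross from $\Box_i$ to $\bBox{i}{c}$, and $\EXIST$ is the only axiom in $\SMLKvb$ that does this.
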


\begin{proof}
We need to check \DISTKvr, \KvrDIS\ and \NECKvr\ in $\SMLKvr$.

\begin{enumerate}
\item \DISTKvr: 


(1)  $\bDia{i}{c}(p,q)\to\Dia_i p$ ($\EXIST$)

(2) $\Box_i \neg p\to \bBox{i}{c}(\neg p, \neg q)$

(3) $\Box_i (p\to q)\to \bBox{i}{c}(p\to q,p)$ ((2) \SUB)

(4) $\Box_i (p\to q)\to \bBox{i}{c}(p\to q,q)$ ((2) \SUB)

(5) $\Box_i (p\to q)\to (\bBox{i}{c}(p\to q,p)\land \bBox{i}{c}(p\to q,q))$ ((3) (4))

(6) $\Box_i (p\to q)\to (\bBox{i}{c}(p,p)\to\bBox{i}{c}(q,p))\land (\bBox{i}{c}(p,q)\to\bBox{i}{c}(q,q))$ (\DISTKvb)

(7) $\Box_i (p\to q)\to (\bBox{i}{c}p \to \bBox{i}{c}q)$ ((6) $\EQUIV$)

\item \KvrDIS: 

(1) $\bDia{i}{c}(p,q)\land\Dia_ir\to \bDia{i}{c}(q, r)\lor \bDia{i}{c}(p,r)$ ($\DISBK$)

(2) $\bDia{i}{c}(p\lor q, p\lor q)\land\Dia_i(p\land q)\to \bDia{i}{c}(p\lor q, p\land q)$ (\SUB)

(3) $\bDia{i}{c}(p\lor q)\land\Dia_i(p\land q)\to (\bDia{i}{c}(p, p\land q)\lor \bDia{i}{c}(q, p\land q))$ (Prop.~\ref{prop.dis})

(4) $\bDia{i}{c}(p\lor q)\land\Dia_i(p\land q)\to (\bDia{i}{c}(p,p)\lor \bDia{i}{c}(q,q))$ (Prop.~\ref{prop.con1})

(5) $\bDia{i}{c}(p\lor q)\land\Dia_i(p\land q)\to (\bDia{i}{c}p\lor \bDia{i}{c}q)$



\item \NECKvr: It is a special case of \NECKvb\ in \SMLKvb\ where the two arguments are the same.
\end{enumerate}
Other axioms and rules in $\SMLKvr$ are exactly the same as in $\SMLKvb$.
\end{proof}
Now as we can see below, the standard technique suffices to prove the completeness of $\SMLKvb$. The only tricky point is the ternary canonical relation.
\begin{theorem}
$\SMLKvb$ is sound and strongly complete w.r.t.\ $\MLKvr$ models.
\end{theorem}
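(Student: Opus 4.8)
The plan is to prove soundness by a routine check that each axiom is valid on every $\MLKvr$ model and each rule preserves validity, pairing the three characteristic axioms with the three frame conditions: $\EQUIV$ with \textit{SYM}, $\EXIST$ with \textit{INCL}, and $\DISBK$ with \textit{ATEUC}, while $\DISTK$, $\DISTKvb$, $\GENK$ and $\NECKvb$ are just the normality principles for the two modalities. For strong completeness I would build a canonical model $\M$ whose states $S$ are the maximal $\SMLKvb$-consistent sets, with $s\to_i t$ iff $\{\phi\mid\Box_i\phi\in s\}\subseteq t$ and $V(s)=\{p\mid p\in s\}$ as usual. The one genuinely new ingredient, as flagged in the remark, is the ternary canonical relation, which I would define in the standard polyadic way: $sR_i^c tu$ iff for every $\bBox{i}{c}(\phi,\psi)\in s$ we have $\phi\in t$ or $\psi\in u$ (equivalently, $\bDia{i}{c}(\alpha,\beta)\in s$ whenever $\alpha\in t$ and $\beta\in u$).

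Next I would verify that this canonical model is indeed an $\MLKvr$ model. \textit{SYM} is immediate from $\EQUIV$. For \textit{INCL}, given $sR_i^c tu$ and $\Box_i\chi\in s$, the dual of $\EXIST$ (with the second argument instantiated to $\bot$) yields $\bBox{i}{c}(\chi,\bot)\in s$, so by the definition of $R_i^c$ either $\chi\in t$ or $\bot\in u$; consistency of $u$ forces $\chi\in t$, hence $s\to_i t$, and $s\to_i u$ follows by \textit{SYM}. For \textit{ATEUC} I would argue exactly as in the verification that the $\SMLKvr$ canonical model is an $\MLKvr$ model, now reading the anti-euclidean pattern directly off $\DISBK$.

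The heart of the argument is the existence lemma for $\bDia{i}{c}$: if $\bDia{i}{c}(\phi,\psi)\in s$ then there are maximal consistent $t\ni\phi$ and $u\ni\psi$ with $sR_i^c tu$. Here the normal polyadic setting pays off and lets me avoid the delicate $B_n/C_n$ bookkeeping needed for $\SMLKvr$. I would construct $t$ and $u$ simultaneously by enumerating all formulas and extending the two sets in alternation, maintaining the single invariant $\bDia{i}{c}(\bigwedge t_n,\bigwedge u_n)\in s$. The base case is the hypothesis; to decide a formula $\theta$ for the first argument I split $\bigwedge t_n$ as $(\bigwedge t_n\land\theta)\lor(\bigwedge t_n\land\neg\theta)$ and apply Proposition~\ref{prop.dis}, so that membership of the disjunction-diamond in $s$ pushes one disjunct into $s$; the second argument is handled identically after flipping via $\EQUIV$. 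The finite stages never collapse, since $\bDia{i}{c}(\bot,\cdot)$ is refutable (from $\NECKvb$), so each $\bigwedge t_n$ and $\bigwedge u_n$ stays consistent. Taking unions gives $\MCS$es $t,u$ with $\phi\in t$, $\psi\in u$, and monotonicity of $\bDia{i}{c}$ in both arguments (derivable from the normality axioms) together with the invariant yields $\bDia{i}{c}(\alpha,\beta)\in s$ for all $\alpha\in t,\beta\in u$, i.e.\ $sR_i^c tu$.

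With both existence lemmas in place, the truth lemma $\M,s\Vdash\chi\iff\chi\in s$ follows by induction, the only nonroutine clause being $\bDia{i}{c}(\phi,\psi)$: the $\Leftarrow$ direction invokes the existence lemma, and the $\Rightarrow$ direction uses the definition of $R_i^c$ to contradict $\bBox{i}{c}(\neg\phi,\neg\psi)\in s$. Strong completeness is then immediate by extending any consistent set to an $\MCS$ and applying the truth lemma, and soundness closes the proof. I expect the simultaneous construction in the existence lemma to be the main obstacle, but the symmetry axiom $\EQUIV$ and the distribution Proposition~\ref{prop.dis} make it markedly cleaner than its $\SMLKvr$ analogue.
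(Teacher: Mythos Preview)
Your proposal is correct and follows essentially the same approach as the paper: the canonical model with the polyadic definition of $R_i^c$, verification of the three frame conditions from $\EQUIV$, $\EXIST$, $\DISBK$, and the standard simultaneous-extension existence lemma for the binary diamond (which the paper simply cites from \cite[p.~200]{mlbook} rather than spelling out). The only cosmetic difference is that the paper builds condition~(1) ($\{\phi:\Box_i\phi\in s\}\subseteq t\cap u$) into the definition of $R_i^c$ and then shows it is automatically met by the polyadic construction, whereas you omit it and derive \textit{INCL} directly from the $\bBox{i}{c}(\chi,\bot)$ trick; both routes use the same idea.
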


\begin{proof} The soundness is straightforward to check. For the completeness we build a canonical model: $$\M=\lr{S,\{\to_i:i \in\I\},\{R_i^c :i\in\I,c\in\C\},V_\C}$$ 
\begin{itemize}
\item $S$ is the set of all maximal $\SMLKvb$-consistent sets of $\MLKvb$ formulas,
\item $s\to_i t \iff \{\phi:\Box_i\phi\in s\}\subseteq t$,
\item $sR_i^c tu \iff$ (1) $\{\phi:\Box_i\phi\in s\}\subseteq t\cap u$ and (2) for any $\bBox{i}{c}(\phi, \psi)\in s$, $\phi\in t$ or $\psi\in u$.
\item $V_\C(s)=\{p:p\in s\}$.
\end{itemize}

Note that the existence lemma for $\bDia{i}{c}$ is quite routine for normal polyadic modal logic, cf. \cite{mlbook}. The idea is to build two $i$-successors $t,u$ of $s$ if $\bDia{i}{c}(\phi, \psi)\in s$, such that $\phi\in t$, $\psi\in u$ and $sR_i^ctu$. According to the method in \cite[pp.\ 200]{mlbook}, we can build two maximal consistent sets $t, u$ such that $\phi\in t$ and $\psi\in u$, and for all $\bBox{i}{c}(\chi_1, \chi_2)\in s$ we have  $\chi_1\in t$ or $\chi_2\in u$. To make sure $sR_i^c tu$ we just need to check condition (1). To see this, note that by $\EXIST$ we have $\Box_i\chi\to \bBox{i}{c}(\chi, \theta)\in s$ and by $\EQUIV$ we have $\Box_i\chi\to \bBox{i}{c}(\theta, \chi)\in s$. Therefore for each $\Box_i\chi\in s$ we have $\bBox{i}{c}(\chi, \bot)\in s$ and $\bBox{i}{c}(\bot, \chi)\in s$. Due to the  construction of maximal consistent sets $t,u$, we have   $\chi\in t$ or $\bot\in u$, and $\bot\in t$ or $\chi\in u$, which implies $\chi\in t\cap u$. Thus $\{\chi:\Box_i\chi\in s\}\subseteq t\cap u$. This concludes the proof that $sR^c_itu.$ Based on the existence lemmas for both $\Dia_i$ and $\bDia{i}{c}$ we can prove the truth lemma $\phi\in s\iff s\Vdash\phi$ using standard techniques. 

In the rest of this proof we verify that the canonical model satisfies the three properties of $\MLKvr$ models. Note that condition (1) in the definition of $R^c_i$ is symmetric, and condition (2) is also implicitly symmetric due to axiom $\EQUIV$. It is also obvious that $sR_i^ctu$ implies $sR_it$ and $sR_iu$ by definition. We only need to verify the anti-euclidean property.

Towards contradiction suppose $sR_i^c tu$, $s\to_i v$ but neither $sR_i^c tv$ nor $sR_i^c uv$. Then according to the definition of $R_i^c$, there exist $\bBox{i}{c}(\phi_1,\psi_1),\bBox{i}{c}(\phi_2,\psi_2)\in s$ such that $\neg\phi_1\in t, \neg\psi_1\in v$, $\neg\phi_2\in u$  and $\neg\psi_2\in v$. Therefore $\neg\psi_1\land\neg\psi_2\in v$. Since $s\to_i v$,  $\Dia_i(\neg\psi_1\land\neg\psi_2)\in s$. By \DISTKvb, $\EQUIV$, and \NECKvb, it is not hard to show $\vdash_\SMLKvb \bBox{i}{c}(\phi_1,\psi_1)\to\bBox{i}{c}(\phi_1,\psi_1\lor\psi_2)$ and $\vdash_\SMLKvb \bBox{i}{c}(\phi_2,\psi_2)\to\bBox{i}{c}(\phi_2,\psi_1\lor\psi_2)$. So $\bBox{i}{c}(\phi_1,\psi_1\lor\psi_2), \bBox{i}{c}(\phi_2,\psi_1\lor\psi_2)\in s$. By Proposition~\ref{prop.46} and \SUB, $\vdash_\SMLKvb \bBox{i}{c}(\phi_1,\psi_1\lor\psi_2)\land\bBox{i}{c}(\phi_2,\psi_1\lor\psi_2)\land\Dia_i(\psi_1\lor\psi_2)\to \bBox{i}{c}(\phi_1,\phi_2)$. Since $\bBox{i}{c}(\phi_1,\psi_1\lor\psi_2)$, $\bBox{i}{c}(\phi_2,\psi_1\lor\psi_2)$ and $\Dia_i(\psi_1\lor\psi_2)$ are all in $s$, $\bBox{i}{c}(\phi_1,\phi_2)\in s$. This together with $sR_i^c tu$ imply that $\phi_1\in t ~\text{or}~ \phi_2\in u$, contradictory to the  assumption that $\neg\phi_1\in t$ and $\neg\phi_2\in u$.
\end{proof}

\section{Applications}
\subsection{Bisimulation}
In the field of modal logic, various bisimulation notions help to characterize the expressive power of the new semantics-driven logics. As a normal modal logic, $\MLKvb$ has a natural  notion of bisimulation (cf.\ \cite{mlbook}), and it will in turn help us to find a notion of bisimulation over FO Kripke models for the original \ELKvr.
\begin{definition}[$\C$-Bisimulation]
Let $\M_1=\lr{S_1,\{\to_{i}^{1}:i\in I\},\{R_{i}^{c}:i\in I, c\in\C\},V_1}$, $\M_2=\lr{S_2,\{\to_{i}^{2}:i\in I,c\in\C\},\{Q_{i}^{c}:i\in I\},V_2}$ be two models for $\MLKvb$ (also for $\MLKvr$). A $\C$-bisimulation between $\M_1$ and $\M_2$ is a non-empty binary relation $\Z\subseteq S_1\times S_2$ such that for all $s_1\Z s_2$, the following conditions are satisfied:
\begin{description}
\item[Inv]: $V_1(s_1)=V_2(s_2)$;
\item[Zig]: $s_1\to_{i}^{1} t_1 \Rightarrow$ $\exists t_2$ such that $s_2\to_{i}^{2} t_2$ and $t_1\Z t_2$;
\item[Zag]: $s_2\to_{i}^{2} t_2 \Rightarrow$ $\exists t_1$ such that $s_1\to_{i}^{1} t_1$ and $t_1\Z t_2$;
\item[Kvb-Zig]: $s_1 R_{i}^{c} t_1 u_1 \Rightarrow$ $\exists t_2, u_2\in S_2$ such that $t_1\Z t_2$, $u_1\Z u_2$ and $s_2 Q_{i}^{c} t_2u_2$;
\item[Kvb-Zag]: $s_2 Q_{i}^{c} t_2 u_2 \Rightarrow$ $\exists t_1, u_1\in S_1$ such that $t_1\Z t_2$, $u_1\Z u_2$ and $s_1 R_{i}^{c} t_1u_1$.
\end{description}
We say $\M,s$ and $\N,t$ are $\C$-bisimilar ($\M,s\bis_\C \N,t$) if there is a $\C$-bisimulation $Z$ between $\M$ and $\N$ and $(s,t)\in Z$. 
\end{definition}






\begin{theorem}
If $\M_1,s_1\cbis\M_2,s_2$, then $\M_1,s_1\equiv_{\MLKvb}\M_2,s_2$.
\end{theorem}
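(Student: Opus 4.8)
The plan is to prove the stronger statement by induction on the structure of the $\MLKvb$ formula $\phi$: for every $\phi$ and every pair $(s_1,s_2)$ with $s_1\Z s_2$, we have $\M_1,s_1\Vdash\phi$ iff $\M_2,s_2\Vdash\phi$. Taking $(s_1,s_2)$ to be the given bisimilar pair then yields the theorem. The atomic case $p$ is immediate from \textbf{Inv}, which forces $V_1(s_1)=V_2(s_2)$; the case $\top$ is trivial, and the Boolean cases $\neg$ and $\land$ follow directly from the induction hypothesis, since the truth of these connectives is determined pointwise at $s_1$ and $s_2$.

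For the modal cases I would exploit that the five clauses of a $\C$-bisimulation come in matched \textbf{Zig}/\textbf{Zag} pairs, so each biconditional splits into two symmetric halves. For $\Box_i\phi$, suppose $\M_1,s_1\Vdash\Box_i\phi$ and pick any $t_2$ with $s_2\to_i^2 t_2$; \textbf{Zag} supplies a $\Z$-matched $t_1$ with $s_1\to_i^1 t_1$, so $\M_1,t_1\Vdash\phi$, and the induction hypothesis transfers this to $\M_2,t_2\Vdash\phi$; as $t_2$ was arbitrary, $\M_2,s_2\Vdash\Box_i\phi$, and the converse is the same argument through \textbf{Zig}. The genuinely new case is $\bBox{i}{c}(\phi,\psi)$, whose semantics unfolds (via the dual of $\bDia{i}{c}$) to: for all $t,u$ with $s R_i^c t u$, either $t\Vdash\phi$ or $u\Vdash\psi$. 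Assuming $\M_1,s_1\Vdash\bBox{i}{c}(\phi,\psi)$, take any $t_2,u_2$ with $s_2 Q_i^c t_2 u_2$; clause \textbf{Kvb-Zag} yields matched $t_1,u_1$ with $s_1 R_i^c t_1 u_1$, whence $\M_1,t_1\Vdash\phi$ or $\M_1,u_1\Vdash\psi$, and two applications of the induction hypothesis give $\M_2,t_2\Vdash\phi$ or $\M_2,u_2\Vdash\psi$. Since $t_2,u_2$ were arbitrary this establishes $\M_2,s_2\Vdash\bBox{i}{c}(\phi,\psi)$, and \textbf{Kvb-Zig} gives the reverse implication identically.

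I do not expect a real obstacle here: the theorem is the standard Hennessy--Milner-style invariance result for a (polyadic) normal modal language, and the five clauses of $\C$-bisimulation were tailored precisely so that each modal case closes immediately. The only point that deserves a word of care is that $\bBox{i}{c}$ is the language primitive while its semantics is most conveniently read through the existential clause for $\bbDia{i}{c}$; handling the box by its ``for all $t,u$'' reading (equivalently, arguing the diamond case by contraposition) keeps the argument clean and avoids treating $\bDia{i}{c}$ as a separate inductive case. Because the bisimulation conditions are symmetric in $\M_1$ and $\M_2$, one could alternatively prove only the left-to-right implication and recover the biconditional by observing that $\Z^{-1}$ is again a $\C$-bisimulation.
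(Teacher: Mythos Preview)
Your proposal is correct and follows essentially the same approach as the paper: a structural induction on $\MLKvb$ formulas, using \textbf{Inv} for atoms, \textbf{Zig}/\textbf{Zag} for $\Box_i$, and \textbf{Kvb-Zig}/\textbf{Kvb-Zag} for the binary modality. The only cosmetic difference is that the paper argues the existential $\bDia{i}{c}(\psi,\chi)$ case directly rather than the universal $\bBox{i}{c}$ reading you chose, which is exactly the dual presentation you yourself note is equivalent.
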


\begin{proof}
Suppose $\M_1,s_1\cbis\M_2,s_2$. We prove by induction on the structure of \MLKvb\ formulas, and the only non-trivial case is when $\phi=\bbDia{i}{c}(\psi,\chi)$.

Suppose $\M_1,s_1\Vdash\bDia{i}{c}(\psi,\chi)$, then $\exists t_1,u_1\in S_1$ such that $s_1R_i^ct_1u_1$ with $\M_1,t_1\Vdash\psi$ and $\M_1,u_1\Vdash\chi$. By \textbf{Kvb-Zig}, there exist $t_2, u_2\in S_2$ such that $t_1\Z t_2$, $u_1\Z u_2$ and $s_2 Q_{i}^{c} t_2u_2$. By induction hypothesis, $\M_2,t_2\Vdash\psi$ and $\M_2,u_2\Vdash\chi$. Therefore, $\M_2,s_2\Vdash\bbDia{i}{c}(\psi,\chi)$. The other side is similar by \textbf{Kvb-Zag}.
\end{proof}

As in normal modal logic, we have the following theorem for \MLKvb\ (we omit the rather standard proof, but one can try to see how the binary modality $\bDia{i}{c}$ facilitates the proof):
\begin{theorem}
Suppose $\M$, $\N$ are finite models. Then $\M,s\cbis\N,t \iff \M,s\equiv_{\MLKvb}\N,t$. 
\end{theorem}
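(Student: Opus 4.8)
The plan is to handle the two directions separately. The left-to-right implication $\M,s\cbis\N,t\Rightarrow\M,s\equiv_{\MLKvb}\N,t$ is exactly the previous theorem and uses no finiteness, so all the work lies in the converse Hennessy--Milner direction. For that, I would show that the relation of modal equivalence
$$Z=\{(w,w')\in S_\M\times S_\N:\M,w\equiv_{\MLKvb}\N,w'\}$$
is itself a $\C$-bisimulation. Since $(s,t)\in Z$ by hypothesis, verifying the five clauses for $Z$ immediately yields $\M,s\cbis\N,t$.

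The key device, and the only place where finiteness is essential, is the existence of \emph{characteristic formulas}. Working in the (finite) disjoint union of $\M$ and $\N$, for any two modally inequivalent states $w,w'$ choose a formula $\gamma_{w,w'}$ true at $w$ but false at $w'$, and set $\delta_w=\bigwedge\{\gamma_{w,w'}:w'\not\equiv_{\MLKvb}w\}$. Because both models are finite this is a finite conjunction, hence a genuine $\MLKvb$ formula, and one checks that for every state $v$ in either model, $v\Vdash\delta_w$ iff $v\equiv_{\MLKvb}w$. These formulas let me reduce each bisimulation clause to a single appeal to modal equivalence.

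With characteristic formulas in hand the verification is routine. \textbf{Inv} holds because equivalent states agree on proposition letters. For \textbf{Zig}, if $s'$ is an $i$-successor of the first coordinate, then $\Dia_i\delta_{s'}$ holds there, hence at its $Z$-partner, producing an $i$-successor satisfying $\delta_{s'}$ and therefore $Z$-related to $s'$; \textbf{Zag} is symmetric. The crux is \textbf{Kvb-Zig}: given $sR_i^c s_1 s_2$, the two witnesses show $\M,s\Vdash\bDia{i}{c}(\delta_{s_1},\delta_{s_2})$, so the partner $t$ satisfies the same formula, yielding $t_1,t_2$ with $tQ_i^ct_1t_2$, $t_1\Vdash\delta_{s_1}$ and $t_2\Vdash\delta_{s_2}$; by the characterising property $s_1Zt_1$ and $s_2Zt_2$, as required. \textbf{Kvb-Zag} is symmetric.

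The one subtle point -- and precisely where the binary modality pays off -- is this last clause: the single formula $\bDia{i}{c}(\delta_{s_1},\delta_{s_2})$ simultaneously fixes \emph{which} successor of $t$ must match $s_1$ and which must match $s_2$. With only the unary diamond this is impossible, since $\bDia{i}{c}(\delta_{s_1}\vee\delta_{s_2})$ cannot force one witness to satisfy $\delta_{s_1}$ and the other $\delta_{s_2}$ -- exactly the gap exhibited by the two pointed models in the remark above. Hence the main (indeed the only real) obstacle is organising \textbf{Kvb-Zig}/\textbf{Kvb-Zag}, and the binary modality makes it fall out at once.
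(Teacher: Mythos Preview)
Your proposal is correct and is precisely the standard Hennessy--Milner argument the paper alludes to when it omits the proof as ``rather standard'' and invites the reader to ``see how the binary modality $\bDia{i}{c}$ facilitates the proof.'' Your explicit identification of why \textbf{Kvb-Zig}/\textbf{Kvb-Zag} go through using $\bDia{i}{c}(\delta_{s_1},\delta_{s_2})$, and your contrast with the unary $\bDia{i}{c}$, is exactly the point the paper is hinting at.
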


Since \LKvr\ and \MLKvb\ have the same expressive power we immediately have: 
\begin{corollary}
Suppose $\M$ and $\N$ are finite models. Then $\M,s\cbis\N,t \iff \M,s\equiv_{\LKvr}\N,t$.
\end{corollary}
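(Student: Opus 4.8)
The plan is to deduce the corollary purely from the preceding theorem (the Hennessy--Milner theorem for $\MLKvb$ on finite models) by showing that the two logical equivalences $\equiv_{\MLKvb}$ and $\equiv_{\LKvr}$ in fact coincide on \emph{all} pointed models, so that finiteness is only needed where the previous theorem already needs it. Thus the work reduces to checking that $\M,s\equiv_{\MLKvb}\N,t$ if and only if $\M,s\equiv_{\LKvr}\N,t$.

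First I would handle the easy inclusion. Recall that, under the abbreviation $\bDia{i}{c}\phi:=\bDia{i}{c}(\phi,\phi)$, the language $\LKvr$ is literally a syntactic fragment of $\MLKvb$: every $\LKvr$ formula \emph{is} an $\MLKvb$ formula, interpreted by the very same semantics on the shared $\MLKvr$ models. Hence if $\M,s\equiv_{\MLKvb}\N,t$, then in particular $\M,s$ and $\N,t$ agree on every $\LKvr$ formula, so $\M,s\equiv_{\LKvr}\N,t$.

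For the converse inclusion I would invoke the Reduction Theorem, which supplies for each $\MLKvb$ formula $\phi$ an $\LKvr$ formula $r(\phi)$ with $\M,w\Vdash\phi \iff \M,w\Vdash r(\phi)$ on every pointed model. So, assuming $\M,s\equiv_{\LKvr}\N,t$, for an arbitrary $\MLKvb$ formula $\phi$ we get $\M,s\Vdash\phi \iff \M,s\Vdash r(\phi) \iff \N,t\Vdash r(\phi) \iff \N,t\Vdash\phi$, the middle equivalence being exactly the hypothesis applied to the $\LKvr$ formula $r(\phi)$. This gives $\M,s\equiv_{\MLKvb}\N,t$, and together with the previous paragraph we conclude $\equiv_{\MLKvb}\,=\,\equiv_{\LKvr}$ as relations on pointed models.

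Finally, specializing to finite $\M,\N$ and chaining with the preceding theorem yields $\M,s\cbis\N,t \iff \M,s\equiv_{\MLKvb}\N,t \iff \M,s\equiv_{\LKvr}\N,t$, which is the desired corollary. I do not expect any genuine obstacle here: the corollary is a clean transport of the binary Hennessy--Milner theorem across the expressive equivalence furnished by the Reduction Theorem. The only subtle point worth flagging is bookkeeping about \emph{where} finiteness is used---it is needed solely for the right-to-left direction of the previous theorem (the Hennessy--Milner step for the binary modality), whereas the coincidence $\equiv_{\MLKvb}=\equiv_{\LKvr}$ holds on arbitrary models and requires no finiteness at all.
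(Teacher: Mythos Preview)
Your proposal is correct and takes essentially the same approach as the paper: the paper simply remarks that since $\LKvr$ and $\MLKvb$ have the same expressive power the corollary follows immediately from the preceding Hennessy--Milner theorem for $\MLKvb$, and you have merely spelled out what ``same expressive power'' entails (the fragment inclusion in one direction, the Reduction Theorem in the other).
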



In \cite{WF13}, a notion of bisimulation has been offered for $\ELKv$, the epistemic logic with unconditional $\Kv_i$ operators. However, for $\ELKvr$ it was not that clear about the suitable bisimulation notion. Now we can recast $\C$-bisimulation back to the setting of $\ELKvr$ over FO Kripke models since $\ELKvr$ and $\MLKvr$ are essentially the same language. 
\begin{definition}[$\C$-bisimulation over FO Kripke models]
Given two pointed FO Kripke models $\M=\langle S_1,D_1,\{\to^1_i:i\in \I\}, V_1, V^1_\C\rangle$, and $\N=\langle S_2,D_2,\{\to^2_i:i\in \I\}, V_2, V^2_\C\rangle$, a relation $\Z\subseteq S_1\times S_2$ is a $\C$-bisimulation between the two models $\M, \N$ if whenever $s_1\Z s_2$ we have:
\begin{description}
\item[Inv] $V_1(s_1)=V_2(s_2)$;
\item[Zig]: $s_1\to_{i}^{1} t_1 \Rightarrow$ $\exists t_2$ such that $s_2\to_{i}^{2} t_2$ and $t_1\Z t_2$;
\item[Zag]: $s_2\to_{i}^{2} t_2 \Rightarrow$ $\exists t_1$ such that $s_1\to_{i}^{1} t_1$ and $t_1\Z t_2$;
\item[Kvr-Zig]If $s_1\to^1_it_1$ and $s_1\to^1_iu_1$ and $V^1_\C(c, t_1)\not=V^1_\C(c, u_1)$ then there are $t_2$ and $u_2$ in $\N$ such that $s_2\to^2_it_2$, $s_2\to^2_iu_2$, $t_1Zt_2$, $u_1Zu_2$, and $V^2_\C(c, t_2)\not=V^2_\C(c, u_2)$ in $\N$.
\item[Kvr-Zag] If $s_2\to^2_it_2$ and $s\to^2_iu_2$ and $V^2_\C(c, t_2)\not=V^2_\C(c, u_2)$ then there are $t_1$ and $u_1$ in $\M$ such that $s_1\to^1_it_1$, $s\to^1_iu_1$, $t_1Zt_2$, $u_1Zu_2$ and $V^1_\C(c, t_1)\not=V^1_\C(c, u_1)$ in $\M$.
\end{description}
Abusing the notation, FO Kripke models $\M,s$ and $\N,t$ are $\C$-bisimilar ($\M,s\bis_{\C}\N,t$) iff there exists a $\C$-bisimulation $Z$ between $\M$ and $\N$ such that $s, t\in Z$.
\end{definition}

Now since $\MLKvb$ and $\MLKvr$ have exactly the same expressive power, and $\MLKvr$ is equivalent to $\ELKvr$ modulo translation. The above $\C$-bisimilation works for $\ELKvr$, as proved in detailed in \cite{GvEW16}: 

\begin{theorem}
For finite FO Kripke models $\M_1, \M_2$:   $\M_1,s_1\bis_{\C}\M_2,s_2$ iff  $\M_1,s_1\equiv_{\ELKvr}\M_2,s_2$.
\end{theorem}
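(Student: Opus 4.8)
The plan is to transport the claim from the first-order setting into the purely relational $\MLKvr$ setting, where a finite-model characterization of $\cbis$ is already available, and then pull the result back through the truth-preserving translation $T$. Concretely, given a finite FO Kripke model $\M=\langle S,D,\{\to_i:i\in\I\},V,V_\C\rangle$, let $\hat{\M}$ be the $\MLKvr$ model built on the same $\langle S,\{\to_i:i\in\I\},V\rangle$ exactly as in the $\Rightarrow$ direction of Lemma~\ref{lem.trans}, i.e.\ by setting $sR_i^ctu$ iff $s\to_it$, $s\to_iu$ and $V_\C(c,t)\neq V_\C(c,u)$. That construction already yields a genuine $\MLKvr$ model and, by Lemma~\ref{lem.trans}, satisfies $\M,s\vDash\phi\iff\hat{\M},s\Vdash T(\phi)$ for every $\ELKvr$ formula $\phi$; in particular $\M_1,s_1\equiv_{\ELKvr}\M_2,s_2$ holds iff $\hat{\M}_1,s_1\equiv_{\MLKvr}\hat{\M}_2,s_2$.

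The crucial bridge step is to observe that a relation $Z\subseteq S_1\times S_2$ is a $\C$-bisimulation between the FO models $\M_1,\M_2$ precisely when it is a $\C$-bisimulation between the induced $\MLKvr$ models $\hat{\M}_1,\hat{\M}_2$. The clauses \textbf{Inv}, \textbf{Zig}, \textbf{Zag} are literally identical in the two definitions, and unfolding the definition of $R_i^c$ in $\hat{\M}$ turns the first-order clauses \textbf{Kvr-Zig}, \textbf{Kvr-Zag} word-for-word into the relational clauses \textbf{Kvb-Zig}, \textbf{Kvb-Zag}: the antecedent ``$s_1\to_it_1$, $s_1\to_iu_1$ and $V_\C(c,t_1)\neq V_\C(c,u_1)$'' is exactly ``$s_1R_i^ct_1u_1$ in $\hat{\M}_1$'', and symmetrically on the $\N$ side. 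Hence $\M_1,s_1\bis_\C\M_2,s_2$ iff $\hat{\M}_1,s_1\cbis\hat{\M}_2,s_2$.

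It then remains only to chain the pieces. Since $\M_1,\M_2$ are finite, so are $\hat{\M}_1,\hat{\M}_2$; the finite-model corollary for $\cbis$ gives $\hat{\M}_1,s_1\cbis\hat{\M}_2,s_2$ iff $\hat{\M}_1,s_1\equiv_{\LKvr}\hat{\M}_2,s_2$, and $\equiv_{\LKvr}$ is the same as $\equiv_{\MLKvr}$ since $\LKvr$ and $\MLKvr$ denote the same language. Combining this with the translation equivalence of the first paragraph yields $\M_1,s_1\bis_\C\M_2,s_2$ iff $\M_1,s_1\equiv_{\ELKvr}\M_2,s_2$, as desired.

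I expect the only genuinely delicate point to be the bridge step, and even there the difficulty is purely bookkeeping: one must check that the two definitions of $\C$-bisimulation, stated over two different kinds of models, really do impose the same back-and-forth conditions once the ternary relation is read off from value-inequality. Everything else is an application of results already in hand (Lemma~\ref{lem.trans} and the finite-model corollary for $\cbis$). One minor caveat worth recording is that $\hat{\M}$ discards the actual values and retains only value-(in)equality; this is exactly why finiteness of $\M$ transfers to $\hat{\M}$ and why the construction is well-defined irrespective of the size of the domain $D$.
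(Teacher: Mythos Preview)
Your proposal is correct and follows exactly the route the paper outlines: the paper does not spell out a proof here but defers to \cite{GvEW16}, remarking that ``$\MLKvb$ and $\MLKvr$ have exactly the same expressive power, and $\MLKvr$ is equivalent to $\ELKvr$ modulo translation,'' which is precisely your strategy of passing to the induced $\MLKvr$ model $\hat{\M}$, identifying the two $\C$-bisimulation notions clause by clause, and invoking the finite-model corollary for $\LKvr$. The only small point worth tightening is that your equivalence $\M_1,s_1\equiv_{\ELKvr}\M_2,s_2 \iff \hat{\M}_1,s_1\equiv_{\MLKvr}\hat{\M}_2,s_2$ relies on $T$ being a bijection (up to logical equivalence) between $\ELKvr$ and $\MLKvr$, which the paper asserts but you use implicitly.
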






\subsection{Completeness of \SMLKv}
The unconditional $\Kv$ operator was introduced in \cite{Plaza89:lopc} in the context of epistemic logic (call the language \ELKv):
$$\phi::=\top\mid p\mid \neg\phi\mid (\phi\wedge\phi)\mid \K_i\phi\mid \Kv_ic$$
Essentially, $\Kv_ic$ is $\Kv_i(\top, c)$ in $\ELKvr$. The semantics is as in the case of \ELKvr, which is based on FO Kripke models. Plaza gave two axioms on top of $\mathbb{S}5$ which are the counterparts of the introspection axioms in standard epistemic logic (over FO epistemic models): 
$$\Kv_ic\to K_i \Kv_ic \qquad \qquad \neg\Kv_ic\to K_i\neg \Kv_ic$$
However, neither \cite{Plaza89:lopc} nor \cite{WF13,WangF14} gave a complete proof of this simple logic. Here we look at this language from our $\bDia{i}{c}$ perspective, and consider the corresponding simple language ($\MLKv$) over the class of all the models: 
$$\phi::=\top\mid p\mid \neg\phi\mid (\phi\wedge\phi)\mid \Box_i\phi\mid \bBox{i}{c}\bot$$
Note that $\neg \Kv_i c$ can be viewed as $\bDia{i}{c}\top$. Thus $\Kv_i c$ is indeed $\bBox{i}{c}\bot$. 

The semantics is just as in \MLKvr\ but we only allow $\top$ as the argument for $\bDia{i}{c}$. As in the case of $\ELKvr$ and $\MLKvr$ we can simply show that: 
\begin{proposition}\label{prop.elkv}
For any set of $\ELKv$ formula $\Sigma\cup\{\phi\}$, $\Sigma\vDash\phi$ iff $T(\Sigma)\Vdash T(\phi).$ 
\end{proposition}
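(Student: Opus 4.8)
The plan is to derive this proposition as an immediate specialization of Lemma~\ref{lem.trans}, rather than redoing the split/unravel/value-assignment construction. The key point is that \ELKv\ sits inside \ELKvr\ as the syntactic fragment in which every $\Kv_i$ carries the trivial condition, i.e.\ we identify $\Kv_i c$ with $\Kv_i(\top,c)$; dually, \MLKv\ sits inside \MLKvr\ as the fragment in which $\bBox{i}{c}$ is only ever applied to $\bot$ (equivalently $\bDia{i}{c}$ only to $\top$). Both fragments are interpreted over exactly the same model classes as the full languages: FO Kripke models on the \ELKv\ side and ternary-relation \MLKvr\ models on the \MLKv\ side, with the same satisfaction clauses. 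Hence local truth, and therefore the entailment relations $\vDash$ and $\Vdash$, restrict verbatim from the larger languages to the fragments.

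First I would check that the translation $T$ restricts to a map from \ELKv\ to \MLKv. This is a one-line induction: $T$ commutes with the Booleans and with $\K_i$, and on the only new clause we get $T(\Kv_i c)=T(\Kv_i(\top,c))=\bBox{i}{c}\neg T(\top)=\bBox{i}{c}\bot$, which is an \MLKv\ formula. So for an \ELKv\ set $\Sigma\cup\{\phi\}$, the translated set $T(\Sigma)\cup\{T(\phi)\}$ consists of \MLKv\ formulas. Then, reading $\Sigma\cup\{\phi\}$ as \ELKvr\ formulas through the above identification, Lemma~\ref{lem.trans} yields $\Sigma\vDash\phi$ iff $T(\Sigma)\Vdash T(\phi)$, with entailment taken over all FO Kripke models and all \MLKvr\ models respectively. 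Since these are precisely the models used to interpret \ELKv\ and \MLKv, and the satisfaction clauses agree, the same bi-implication holds when $\vDash$ and $\Vdash$ are read as the \ELKv- and \MLKv-entailments, which is exactly the statement of the proposition.

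There is essentially no obstacle here: the only thing to notice is that the model transformations in the proof of Lemma~\ref{lem.trans} (splitting, unraveling, and the value assignment via $\sim$) never inspect the argument of $\Kv_i$ or $\bDia{i}{c}$, so they preserve truth of the sublanguage formulas as a special case. If one preferred a self-contained argument, the same construction could be rerun with $\top$ fixed as the condition throughout: the forward direction simply sets $sR_i^c tu$ iff $s\to_i t$, $s\to_i u$ and $V_\C(c,t)\neq V_\C(c,u)$, under which $\bDia{i}{c}\top$ holds at $s$ exactly when $s$ has two $i$-successors disagreeing on $c$, i.e.\ exactly when $\neg\Kv_i c$ holds; the backward direction is then the restriction of the splitting/unraveling construction. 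But invoking Lemma~\ref{lem.trans} makes all of this unnecessary.
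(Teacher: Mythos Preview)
Your proposal is correct and matches the paper's own treatment: the paper does not give a separate proof of this proposition but simply writes ``As in the case of $\ELKvr$ and $\MLKvr$ we can simply show that'' and states the result, i.e.\ it defers entirely to Lemma~\ref{lem.trans}. Your observation that $T$ restricts to a map $\ELKv\to\MLKv$ (since $T(\Kv_i c)=\bBox{i}{c}\bot$) and that the model classes and satisfaction clauses coincide is exactly the content needed to make that deferral precise.
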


Now based on this view, a natural system $\SMLKv$  is obtained by simplifying the system $\SMLKvr$:\\

\begin{minipage}{0.64\textwidth}
\begin{center}
\begin{tabular}{lc}
\multicolumn{2}{c}{System \SMLKv}\\
\multicolumn{2}{l}{Axiom Schemas}\\
\texttt{TAUT} & \tr{ all the instances of tautologies}\\
\DISTK &$ \Box_i (p\to q)\to (\Box_i p\to \Box_i q)$\\
\texttt{INCLT} & $\bDia{i}{c}\top\to\Dia_i\top$
\end{tabular}
\end{center}
\end{minipage}
\hfill
\begin{minipage}{0.30\textwidth}
\begin{center}
\begin{tabular}{lc}
\multicolumn{2}{l}{Rules}\\
\texttt{MP}& $\dfrac{\phi,\phi\to\psi}{\psi}$\\
\GENK &$\dfrac{\phi}{\Box_i \phi}$\\
\SUB &$\dfrac{\phi}{\phi [p\slash\psi]}$\\
\RE &$\dfrac{\psi\lra\chi}{\phi\lra \phi[\psi\slash\chi]}$\\
\end{tabular}
\end{center}
\end{minipage}
Note that due to the fact that the only $\bDia{i}{c}$ formula is $\bDia{i}{c}\top$ (and $\bBox{i}{c}\bot$), most of the previous axioms and rules do not apply. We only need to add one axiom \texttt{INCLT} on top of the usual normal modal logic, inspired by the \texttt{INCL} axioms of \SMLKvb. 

We go on to prove the completeness of $\SMLKv$.

\begin{definition}
The canonical model $\M$ is a tuple $\lr{S,\{\to_i:i\in\I\},\{R_i^c:i\in\I, c\in\C\},V}$ where:

\begin{itemize}
\item $S$ is the set of maximal $\SMLKv$-consistent sets,
\item $s\to_i t \iff \{\phi:\Box_i\phi\in s\}\subseteq t$,
\item $sR_i^c tt' \iff s\to_i t$, $s\to_i t'$ and $\bDia{i}{c}\top\in s$,
\item $V(s)=\{p:p\in s\}$
\end{itemize}
\end{definition}

The only tricky point is the definition of canonical relations $R_i^c$. The intuition is that as long as a state can see two states having different values, then we can safely assume all the states that it can see have different values. Note that in $\MLKvr$ models we also allow $sR_i^c tt$. We first need to verify that $\M$ is an $\MLKvr$ model:

\begin{proposition}
The canonical model $\M$ is an $\MLKvr$ model.
\end{proposition}

\begin{proof}
We only need to verify the three conditions. The first two are again obvious by definition, so we only prove the anti-euclidean property. Suppose $sR_i^c tt'$ and $s\to_i u$. Then $s\to_i t$, $s\to_i t'$ and $\bDia{i}{c}\top\in s$. So both $sR_i^c tu$ and $sR^c_i t'u$ by the definition of $R_i^c$.
\end{proof}

The existence lemma for $\Dia_i$ is routine.
As for the case of $\bDia{i}{c}$: 
\begin{lemma}(Existence Lemma for $\bDia{i}{c}$)
If $\bDia{i}{c}\top\in s$, then there exist $t,u$ such that $s R_i^c tu$.
\end{lemma}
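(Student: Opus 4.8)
The plan is to establish the existence lemma for the $\bDia{i}{c}$ modality in the canonical model of $\SMLKv$, namely that whenever $\bDia{i}{c}\top\in s$ there are maximal consistent sets $t,u$ with $sR_i^c tu$. Given the very simple shape of the canonical relation $R_i^c$ in this system, where $sR_i^c tu$ holds precisely when $s\to_i t$, $s\to_i u$ and $\bDia{i}{c}\top\in s$, the task reduces to producing two $i$-successors of $s$ (we may even take $t=u$). First I would note that by the axiom \texttt{INCLT}, $\bDia{i}{c}\top\to\Dia_i\top$, together with the assumption $\bDia{i}{c}\top\in s$ and the fact that $s$ is a maximal consistent set closed under modus ponens, we obtain $\Dia_i\top\in s$.

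Next I would invoke the existence lemma for $\Dia_i$, which the excerpt has already declared routine and available. Since $\Dia_i\top\in s$, that lemma yields a maximal consistent set $t$ with $s\to_i t$ and $\top\in t$. Setting $u=t$, all three defining clauses of $sR_i^c tu$ are immediately satisfied: $s\to_i t$ and $s\to_i u$ hold because $u=t$ is an $i$-successor, and $\bDia{i}{c}\top\in s$ holds by the hypothesis of the lemma. Hence $sR_i^c tu$, which is exactly the conclusion required. It is worth emphasising that the definition of $R_i^c$ in this canonical model permits $sR_i^c tt$, so producing a single successor suffices and there is no need to construct two distinct worlds or to carry out any of the delicate splitting argument seen in Lemma~\ref{lem.trans}.

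I expect no genuine obstacle here, since the only nontrivial content of the statement has been absorbed into the definition of $R_i^c$ and into the axiom \texttt{INCLT}. The one point that merits a word of care is the justification that $\Dia_i\top\in s$: this is where \texttt{INCLT} does its work, and it is the precise reason the axiom was added to $\SMLKv$. Were \texttt{INCLT} absent, a world could contain $\bDia{i}{c}\top$ while having no $i$-successors at all, and the canonical model would then fail to supply the witnesses $t,u$. Thus the whole proof is a short chain: apply \texttt{INCLT} to get $\Dia_i\top\in s$, apply the $\Dia_i$ existence lemma to get an $i$-successor $t$, and read off $sR_i^c tt$ from the definition of the canonical ternary relation.
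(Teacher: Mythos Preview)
Your proposal is correct and follows essentially the same argument as the paper: use \texttt{INCLT} to obtain $\Dia_i\top\in s$, apply the existence lemma for $\Dia_i$ to get an $i$-successor $t$, and conclude $sR_i^c tt$ directly from the definition of the canonical ternary relation.
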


\begin{proof}
Suppose $\bDia{i}{c}\top\in s$ then due to \texttt{INCLT}  $\Dia_i\top\in s$. By the existence lemma for $\Dia_i$, it follows that there exists $t$ such that $s\to_i t$. Therefore by definition $sR_i^c tt$. 
\end{proof}

\begin{lemma}
For any $s$ in $\M$ and $\MLKv$ formula $\phi$, $\M,s\Vdash\phi \iff \phi\in s$.
\end{lemma}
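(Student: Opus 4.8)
The plan is to prove the statement by induction on the structure of $\phi$, exactly as in a standard truth lemma for normal modal logic. The case $\phi=\top$ is immediate, the atomic case $\phi=p$ holds by the definition of $V$, and the Boolean cases $\neg\psi$ and $\psi\wedge\chi$ follow from the induction hypothesis together with the defining closure properties of maximal consistent sets (closure under modus ponens, and the fact that for each such set and each formula exactly one of it and its negation is a member). The case $\phi=\Box_i\psi$ is the usual one: if $\Box_i\psi\in s$ then every $i$-successor contains $\psi$ by the definition of $\to_i$, so by the induction hypothesis $\M,s\Vdash\Box_i\psi$; while if $\Box_i\psi\notin s$ then $\Dia_i\neg\psi\in s$, and the routine existence lemma for $\Dia_i$ supplies a successor falsifying $\psi$.

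The only genuinely new case is $\phi=\bBox{i}{c}\bot$. Since this is the dual of $\bDia{i}{c}\top$, it suffices to establish $\M,s\Vdash\bDia{i}{c}\top \iff \bDia{i}{c}\top\in s$ and then pass to negations. For the direction $\bDia{i}{c}\top\in s \Rightarrow \M,s\Vdash\bDia{i}{c}\top$, I would apply the Existence Lemma for $\bDia{i}{c}$ just proved: it yields $t,u$ with $sR_i^c tu$, and since $\M,t\Vdash\top$ and $\M,u\Vdash\top$ hold vacuously, the semantic clause for $\bDia{i}{c}$ is satisfied at $s$. For the converse $\M,s\Vdash\bDia{i}{c}\top \Rightarrow \bDia{i}{c}\top\in s$, I would unfold the semantics to obtain witnesses $t,u$ with $sR_i^c tu$, and then read off directly from the canonical definition of $R_i^c$ — which stipulates $sR_i^c tt'$ only if $\bDia{i}{c}\top\in s$ — that $\bDia{i}{c}\top$ lies in $s$.

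There is essentially no obstacle in this last case: the difficulty that normally arises in a $\bDia{}{}$-clause vanishes because the argument is always $\top$, so nothing of content needs to be transported into the witnesses $t,u$ beyond their bare existence. The only point requiring care is to keep straight the correspondence between the object-language formula $\bBox{i}{c}\bot$ and its semantic reading $\bDia{i}{c}\top$, and to confirm that the canonical relation $R_i^c$ is tuned precisely so that its nonemptiness at $s$ is equivalent to membership of $\bDia{i}{c}\top$ in $s$ — which is exactly what the Existence Lemma and the definition of $R_i^c$ jointly deliver.
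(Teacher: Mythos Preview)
Your proposal is correct and follows essentially the same approach as the paper's own proof. The paper abbreviates the routine cases and focuses only on the $\bDia{i}{c}\top$ case, handling both directions exactly as you describe: the existence lemma for one direction and the definition of $R_i^c$ for the other.
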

\begin{proof}
The only interesting case is when $\phi=\bDia{i}{c}\top$. 

$\Rightarrow$: Suppose $\M,s\Vdash\bDia{i}{c}\top$. Then there exist $t,t'$ such that $sR_i^c tt'$. By the definition of $R_i^c$, $\bDia{i}{c}\top\in s$.

$\Leftarrow$: Suppose $\bDia{i}{c}\top\in s$. By the existence lemma for $\bDia{i}{c}$, there exist $t$, $t'$ such that $sR_i^c tt'$. Therefore $\M,s\Vdash\bDia{i}{c}\top$.
\end{proof}

\begin{theorem}
$\SMLKv$ is strongly complete w.r.t.\ $\MLKv$ models.
\end{theorem}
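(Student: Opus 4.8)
The plan is to obtain strong completeness from the canonical model in the usual Henkin style, which at this point is essentially immediate because all the genuinely non-trivial work has already been carried out above. Strong completeness w.r.t.\ $\MLKv$ models amounts to showing that every $\SMLKv$-consistent set of $\MLKv$ formulas is satisfiable in some $\MLKv$ model (equivalently, that $\Gamma\Vdash\phi$ implies $\Gamma\vdash\phi$, via the contrapositive applied to $\Gamma\cup\{\neg\phi\}$). First I would dispatch soundness: \texttt{TAUT}, \DISTK, \MP\ and \GENK\ are sound by the standard argument for the normal modal fragment, \SUB\ and \RE\ preserve validity as usual, and \texttt{INCLT} is valid on $\MLKv$ models directly from the \textit{INCL} constraint, since a witness $sR_i^c uv$ for $\bDia{i}{c}\top$ forces $s\to_i u$ and hence $\Dia_i\top$.

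For completeness, let $\Gamma$ be any $\SMLKv$-consistent set. By a routine Lindenbaum argument I would extend $\Gamma$ to a maximal $\SMLKv$-consistent set $s$, which is by definition a state of the canonical model $\M$. The preceding proposition has established that $\M$ is an $\MLKvr$ model, and since $\MLKv$ models are exactly $\MLKvr$ models (only the language is restricted), $\M$ is in particular an $\MLKv$ model. Applying the Truth Lemma at $s$, every formula of $s$ is true at $\M,s$; in particular $\M,s\Vdash\psi$ for all $\psi\in\Gamma\subseteq s$, so $\Gamma$ is satisfied at the pointed $\MLKv$ model $\M,s$. Combining soundness with this satisfiability result yields strong completeness.

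There is no real obstacle left at this stage: the two points that would normally demand care — namely that the canonical ternary relation $R_i^c$ (defined so that $sR_i^c tt'$ whenever $s\to_i t$, $s\to_i t'$ and $\bDia{i}{c}\top\in s$) yields a genuine $\MLKvr$ model, and that it supports the Existence Lemma for $\bDia{i}{c}$ — have both already been discharged above. The only conceptual subtlety, as the text notes, was choosing the canonical $R_i^c$ correctly; once $\bDia{i}{c}\top\in s$ licenses connecting any two $i$-successors of $s$, the anti-euclidean condition and the existence lemma follow, and the present theorem is just the standard wrap-up of the canonical model method.
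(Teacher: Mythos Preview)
Your proposal is correct and follows essentially the same approach as the paper: the theorem is the standard Henkin wrap-up, obtained immediately from the already-established facts that the canonical model is an $\MLKvr$ model (hence an $\MLKv$ model, since only the language differs) and that the Truth Lemma holds. The paper in fact states the theorem without further proof once those lemmas are in place, so your outline matches exactly.
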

A corollary follows immediately based on Proposition~\ref{prop.elkv}: 
\begin{corollary}
$\SMLKv$ (viewing \texttt{INCLT} as $\neg \Kv_i\top\to\hK_i\top$) is strongly complete w.r.t.\ $\ELKvr$ models.
\end{corollary}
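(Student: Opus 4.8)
The plan is to obtain this corollary for free from the two results immediately preceding it, transferring the strong completeness of $\SMLKv$ over $\MLKv$ models into the first-order setting by means of the translation $T$. First I would note that, restricted to $\ELKv$ and identifying $\neg\top$ with $\bot$, the map $T$ is a bijection between $\ELKv$ formulas and $\MLKv$ formulas: it sends $K_i$ to $\Box_i$, commutes with the boolean connectives, and sends the unconditional $\Kv_i c = \Kv_i(\top,c)$ to $\bBox{i}{c}\bot$, exactly the only shape a ``knowing-value'' subformula may take in $\MLKv$. Hence $T$ has a structural inverse $T^{-1}$ that reads every $\MLKv$ formula back as an $\ELKv$ formula.

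Next I would make precise the proof-theoretic side of this bijection: the system $\SMLKv$, read through $T^{-1}$ in the $\ELKv$ language, derives $\Sigma\vdash\phi$ if and only if $\SMLKv$ derives $T(\Sigma)\vdash T(\phi)$. Because $T$ and $T^{-1}$ are homomorphisms for $\neg,\land$ and $\Box_i$, every instance of the axiom schemas (\texttt{TAUT}, \DISTK) and every application of \texttt{MP}, \GENK, \SUB, \RE is carried to a corresponding instance or application on the other side; and the single value-axiom \texttt{INCLT}, namely $\bDia{i}{c}\top\to\Dia_i\top$, is mapped by $T^{-1}$ to $\neg\Kv_i c\to\hK_i\top$, which is precisely the reading indicated in the statement. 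Thus derivations transport verbatim in both directions.

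With these two observations, the corollary closes as a chain of equivalences. For any $\ELKv$ set $\Sigma\cup\{\phi\}$ we have $\Sigma\vDash\phi$ iff $T(\Sigma)\Vdash T(\phi)$ by Proposition~\ref{prop.elkv}; this holds iff $T(\Sigma)\vdash_{\SMLKv} T(\phi)$ by the soundness and strong completeness of $\SMLKv$ over $\MLKv$ models established just above; and this in turn holds iff $\Sigma$ proves $\phi$ in the reinterpreted system, by the proof-system correspondence of the previous paragraph. Reading the chain left-to-right gives completeness and right-to-left gives soundness, so $\SMLKv$ (with \texttt{INCLT} read as $\neg\Kv_i c\to\hK_i\top$) is strongly complete over the first-order Kripke models.

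The main point needing care is the proof-system correspondence, i.e.\ that applying $T$ (or $T^{-1}$) to a derivation yields a derivation. This amounts to checking that $T$ commutes with uniform substitution (for \SUB) and with replacement of provable equivalents (for \RE), which is a routine structural induction since $T$ acts as a homomorphism on all constructs and the \texttt{INCLT} image is a fixed schema with no substitutable proposition letter occurring inside the value modality. No genuinely semantic obstacle remains, as all the real work has already been discharged by Proposition~\ref{prop.elkv} and the completeness theorem for $\SMLKv$.
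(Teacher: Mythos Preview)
Your proposal is correct and follows essentially the same route as the paper: combine Proposition~\ref{prop.elkv} with the strong completeness of $\SMLKv$ over $\MLKv$ models, transporting derivability through the syntactic bijection $T$. You simply spell out in more detail the proof-theoretic correspondence that the paper compresses into the phrase ``follows immediately'' (and your reading of \texttt{INCLT} as $\neg\Kv_i c\to\hK_i\top$ is the accurate back-translation).
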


\section{Discussion and future work}
In this paper, we introduce a ternary relation based simple semantics to the ``knowing value'' logic without explicit value assignments. Under this semantics, the logic can be viewed as a disguised normal modal logic with both standard unary and binary modalities. The use of this perspective is demonstrated by various applications. 

Another intuitive way to simplify the original FO-Kripke  semantics is to introduce a binary relation $\asymp_c$ for each $c$ representing the inequality of the value of $c$. Correspondingly, in the language, besides $\Diamond_i\phi$ we may introduce $\Diamond_c\phi$ formulas saying that there is a different world where $\phi$ holds but $c$ has a different value compared to the current world. However, it is not straightforward to express $\neg \Kv(\psi, c)$ in this language. The closest  counterpart  $\Diamond_i(\psi\land \Diamond_c\psi)$ will not do the job alone. We probably need to add a further condition: $\Diamond_i\Diamond_c p\to \Diamond_ip$ which says the $\asymp_c$ successors of an $i$-reachable world are again $i$-reachable. Actually it means that we should combine $\to_i$ and $\asymp_c$ which is almost our ternary $R_i^c$.  Moreover, to axiomatize this $\asymp_c$ we need the axioms of anti-equivalence (irreflexivity, symmetry, and anti-euclidean property\footnote{Here anti-euclidean property means if $x\asymp_cy$ and there is another world $z$ then $x\asymp_c z$ or $y\asymp_cz$. A simple disjoint union argument can show it is not modally definable. Thanks to Zhiguang Zhao for pointing it out.}). However, irreflexivity and anti-euclidean property for the binary $\asymp_c$ are not definable in modal logic. We probably need to do the same as in the $\Dia_i^c$ case: use $\KvDIS$ to capture the $i$-accessible anti-euclidean property to some extent. Having said the above, it is clear that our approach in this paper is more intuitive and technically natural.    

\medskip

To close, we list a few directions which we leave for  future occasions: 
\begin{itemize}
\item The corresponding results in the setting of epistemic (S5) models.
\item Characterization theorem of $\ELKvr$ (\MLKvr) within first-order modal logic via $\C$-bisimulation.
\item A decision procedure for $\ELKvr$ (\MLKvr) based on the simplified models. 
\item In similar ways, we can try to simplify the semantics for other ``knowing-X'' logics, such as knowing whether, knowing how, and so on. 
\end{itemize}

\bibliographystyle{aiml16}
\bibliography{aiml16}

\end{document}